\def\eqref#1{equation~\ref{#1}}
\def\1{\bm{1}}
\def\vtheta{{\bm{\theta}}}
\def\vlambda{{\bm{\lambda}}}
\def\vpsi{{\bm{\psi}}}
\def\vphi{{\bm{\varphi}}}
\def\vf{{\bm{f}}}
\def\vg{{\bm{g}}}
\def\vu{{\bm{u}}}
\def\vv{{\bm{v}}}
\def\vx{{\bm{x}}}
\def\vy{{\bm{y}}}
\def\vU{{\bm{U}}}
\DeclareMathAlphabet{\mathsfit}{\encodingdefault}{\sfdefault}{m}{sl}
\SetMathAlphabet{\mathsfit}{bold}{\encodingdefault}{\sfdefault}{bx}{n}
\newcommand{\softmax}{\mathrm{softmax}}
\definecolor{funccolor}{rgb}{0.58,0,0.82}
\definecolor{funccolor1}{rgb}{0.,0.6,0.}
\definecolor{codegrey}{rgb}{0.5,0.5,0.5}
\newcommand{\FuncCall}[2]{\texttt{\bfseries {\color{funccolor} #1}(#2)}}
\newcommand{\diag}{\mathop{\mathrm{diag}}}
\newcommand*{\Scale}[2][4]{\scalebox{#1}{$#2$}}%
\@nx\else[{#1}]\fi}%
\@nx\else[{#1}]\fi\else\csname #2\@xa\endcsname\fi}%
\newcommand\blthanks[1]{%
  \begingroup
  \renewcommand\thefootnote{*}\thanks{#1}%
  \addtocounter{footnote}{-1}%
  \endgroup
}
\title{Accelerated Linearized Laplace Approximation for Bayesian Deep Learning}
\author{%
  Zhijie Deng$^{1\,2}$, Feng Zhou$^{2\,3}$, Jun Zhu$^2$ \blthanks{Corresponding author. Majority of work completed while Zhijie Deng was at Tsinghua University.} \\
  $^1$ Qing Yuan Research Institute, Shanghai Jiao Tong University \\
  $^2$ Dept. of Comp. Sci. \& Tech., BNRist Center, THU-Bosch Joint ML Center, Tsinghua University\\
  $^3$ Center for Applied Statistics, School of Statistics, Renmin University of China\\
  \texttt{zhijied@sjtu.edu.cn}, \; \texttt{feng.zhou@ruc.edu.cn}, \; \texttt{dcszj@tsinghua.edu.cn}\\
}
\begin{document}

\maketitle

\begin{abstract}
Laplace approximation (LA) and its linearized variant (LLA) enable effortless adaptation of pretrained deep neural networks to Bayesian neural networks. The generalized Gauss-Newton (GGN) approximation is typically introduced to improve their tractability. However, LA and LLA are still confronted with non-trivial inefficiency issues and should rely on Kronecker-factored, diagonal, or even last-layer approximate GGN matrices in practical use. These approximations are likely to harm the fidelity of learning outcomes. To tackle this issue, inspired by the connections between LLA and neural tangent kernels (NTKs), we develop a Nystr\"{o}m approximation to NTKs to accelerate LLA. Our method benefits from the capability of popular deep learning libraries for forward mode automatic differentiation, and enjoys reassuring theoretical guarantees. Extensive studies reflect the merits of the proposed method in aspects of both scalability and performance. Our method can even scale up to architectures like vision transformers. We also offer valuable ablation studies to diagnose our method. Code is available at \url{https://github.com/thudzj/ELLA}.
\end{abstract}

\section{Introduction}
\label{sec:intro}
Deep neural networks (DNNs) excel at modeling deterministic relationships and have become \emph{de facto} solutions for diverse pattern recognition problems~\cite{he2016deep,vaswani2017attention}.
However, DNNs fall short in reasoning about model uncertainty~\cite{blundell2015weight} and suffer from poor calibration~\cite{guo2017calibration}.
These issues are intolerable in risk-sensitive scenarios like self-driving~\cite{kendall2017uncertainties}, healthcare~\cite{leibig2017leveraging}, finance~\cite{jang2017empirical}, etc.

Bayesian Neural Networks (BNNs) have emerged as effective prescriptions to these pathologies~\cite{mackay1992practical,hinton1993keeping,neal1995bayesian,graves2011practical}.
They usually proceed by estimating the posteriors over high-dimensional NN parameters.
Due to some intractable integrals, diverse approximate inference methods have been applied to learning BNNs, spanning variational inference (VI)~\cite{blundell2015weight,louizos2016structured}, Markov chain Monte Carlo (MCMC)~\cite{welling2011bayesian,chen2014stochastic}, Laplace approximation (LA)~\cite{mackay1992bayesian,ritter2018scalable,kristiadi2020being}, etc.

LA has recently gained unprecedented attention because its \emph{post-hoc} nature nicely suits with the \emph{pretraining-finetuning} fashion in deep learning (DL).
LA approximates the posterior with a Gaussian around its maximum, whose mean and covariance are the \emph{maximum a posteriori} (MAP) and the inversion of the Hessian respectively.
It is a common practice to approximate the Hessian with the generalized Gauss-Newton (GGN) matrix~\cite{martens2015optimizing} to make the whole workflow more tractable.

Linearized LA (LLA)~\cite{foong2019between,immer2021improving} applies LA to the first-order approximation of the NN of concern.
Immer et al.~\cite{immer2021improving} argue that LLA is more sensible than LA in the presence of GGN approximation; LLA can perform on par with or better than popular alternatives on various uncertainty quantification (UQ) tasks~\cite{foong2019between,daxberger2021laplace,daxberger2021bayesian}.
The \texttt{Laplace} library~\cite{daxberger2021laplace} further substantially advances LLA's applicability, making it a simple and competing baseline for Bayesian DL.

The practical adoption of LA and LLA actually entails further approximations on top of GGN. %
E.g., when using \texttt{Laplace} to process a pretrained ResNet~\cite{he2016deep}, practitioners are recommended to resort to a Kronecker-factored (KFAC)~\cite{martens2015optimizing} or diagonal approximation of the full GGN matrix for tractability.
An orthogonal tactic is to apply LA/LLA to only NNs' last layer~\cite{kristiadi2020being}.
Yet, the approximation errors in these cases can hardly be identified, significantly undermining the fidelity of the learning outcomes.

This paper aims at scaling LLA up to make probabilistic predictions in a more assurable way.
We first revisit the inherent connections between Neural Tangent Kernels (NTKs)~\cite{jacot2018neural} and LLA~\cite{khan2019approximate,immer2021improving}, and find that, if we can approximate the NTKs with the inner product of some low-dimensional vector representations of the data, LLA can be considerably accelerated.
Given this finding, we propose to adapt the Nystr\"{o}m method to approximate the NTKs of multi-output NNs, and advocate leveraging forward mode automatic differentiation (\emph{fwAD}) to efficiently compute the involved Jacobian-vector products (JVPs).
The resultant \emph{accElerated LLA (ELLA)} preserves the principal structures of vanilla LLA yet without explicitly computing/storing the costly GGN/Jacobian matrices for the training data. %
What's more, we theoretically analyze the approximation error between the predictive of ELLA and that of vanilla LLA, and find that it deceases rapidly as the Nystr\"{o}m approximation becomes accurate.

We perform extensive studies to show that ELLA can be a low-cost and effective baseline for Bayesian DL.
We first describe how to specify the hyperparameters of ELLA, and use an illustrative regression task to demonstrate the effectiveness of ELLA (see \cref{fig:illu2}).
We then experiment on standard image classification benchmarks to exhibit the superiority of ELLA over competing baselines in aspects of both performance and scalability.
We further show that ELLA can even scale up to modern architectures like vision transformers (ViTs)~\cite{dosovitskiy2020image}. %

\begin{figure*}[t]
\centering
\begin{subfigure}[b]{\linewidth}
    \centering
    \includegraphics[width=\linewidth]{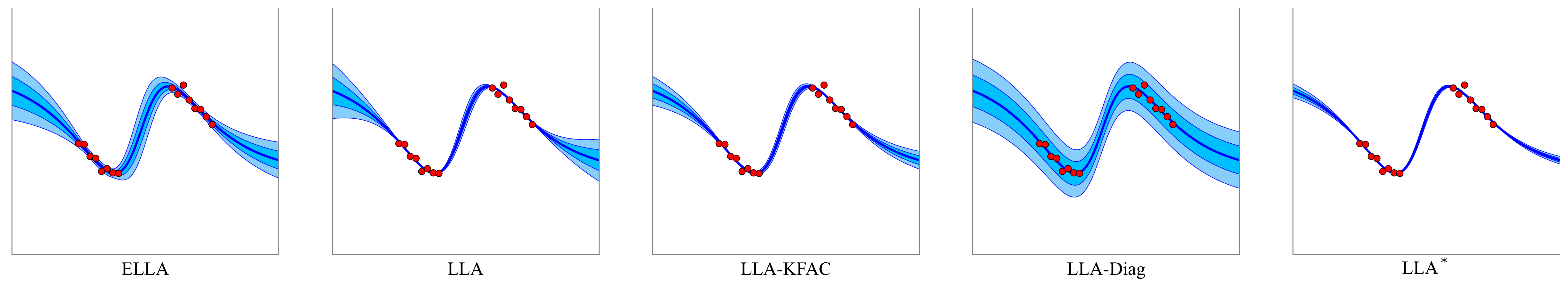}
    \end{subfigure}
\caption{\footnotesize
1-D regression on $y=\sin 2x + \epsilon, \epsilon \sim \mathcal{N}(0, 0.2)$.
Red dots, central blue curves, and shaded regions refer to the training data, mean predictions, and uncertainty respectively.
The model is a pretrained multilayer perceptron (MLP) with 3 hidden layers.
As shown, the predictive uncertainty of ELLA is on par with or better than the competitors such as LLA with KFAC approximation (LLA-KFAC), LLA with diagonal approximation (LLA-Diag), and last-layer LLA (LLA$^*$).
}
\label{fig:illu2}
\end{figure*}

\section{Background}
\label{sec:la}
Consider a learning problem on $\mathcal{D}=(\mathbf{X}, \mathbf{Y})=\{(\vx_i, \vy_i)\}_{i=1}^N$, where $\vx_i \in \mathcal{X}$ and $\vy_i \in \mathbb{R}^{C}$ (e.g., regression) or $\{0, 1\}^{C}$ (e.g., classification) refer to observations and targets respectively.
The advance in machine learning suggests using an NN $g_\vtheta(\cdot): \mathcal{X} \rightarrow \mathbb{R}^{C}$ with parameters $\vtheta \in \mathbb{R}^{P}$ for data fitting.
Despite well-performing, regularly trained NNs only capture the most likely interpretation for the data, thus miss the ability to reason about uncertainty and are prone to overfitting and overconfidence.

BNNs~\cite{mackay1992practical,hinton1993keeping,neal1995bayesian} characterize model uncertainty by probabilistic principle and can holistically represent all likely interpretations.
Typically, BNNs impose a prior $p(\vtheta)$ on NN parameters and chase the Bayesian posterior  $\Scale[0.95]{p(\vtheta|\mathcal{D})={p(\mathcal{D}|\vtheta)p(\vtheta)}/{p(\mathcal{D})}}$ where $\Scale[0.95]{p(\mathcal{D}|\vtheta) = \prod_i p(\vy_i|\vx_i, \vtheta) = \prod_i p(\vy_i|g_\vtheta(\vx_i))}$.
Analytical estimation is usually intractable due to NNs' high nonlinearity.
Thereby, BNN methods usually find a surrogate of the true posterior $q(\vtheta)\approx p(\vtheta|\mathcal{D})$ via approximate inference methods like variational inference (VI)~\cite{blundell2015weight,hernandez2015probabilistic,louizos2016structured,zhang2018noisy,khan2018fast}, Laplace approximation (LA)~\cite{mackay1992bayesian,ritter2018scalable}, Markov chain Monte Carlo (MCMC)~\cite{welling2011bayesian,chen2014stochastic,zhang2019cyclical}, particle-optimization based variational inference (POVI)~\cite{liu2016stein}, etc.

BNNs predict for new data $\vx_*$ by posterior predictive $p(\vy |\vx_*, \mathcal{D}) = \mathbb{E}_{p(\vtheta|\mathcal{D})} p(\vy | \vx_*, \vtheta) \approx \mathbb{E}_{q(\vtheta)} p(\vy | \vx_*, \vtheta) \approx \frac{1}{S} \sum_{s=1}^S p(\vy | g_{\vtheta_s}(\vx_*))$
where $\vtheta_s \sim q(\vtheta)$ are i.i.d. Monte Carlo (MC) samples.

\subsection{Laplace Approximation and Its Linearized Variant}
Typically, LA builds a Gaussian approximate posterior in the form of
    $ q(\vtheta) = \mathcal{N}(\vtheta; \hat{\vtheta}, \mathbf{\Sigma}),
    $
where $\hat{\vtheta}$ denotes the MAP solution, i.e., $\hat{\vtheta} =\arg \max_\vtheta \log p(\mathcal{D}|\vtheta) + \log p(\vtheta)$, and $\mathbf{\Sigma}$ is the inversion of the Hessian of the negative log posterior w.r.t. parameters, i.e.,
$\Scale[1]{\mathbf{\Sigma}^{-1}= -\nabla_{\vtheta\vtheta}^2 (\log p(\mathcal{D}|\vtheta) + \log p(\vtheta))|_{\vtheta=\hat{\vtheta}}}. %
$
Without loss of generality, we base the following discussion on the isotropic Gaussian prior $p(\vtheta)=\mathcal{N}(\vtheta; \mathbf{0}, \sigma_0^2\mathbf{I}_P)$\footnote{$\mathbf{I}_P$ refers to the identity matrix of size $P \times P$.}, %
so $-\nabla_{\vtheta\vtheta}^2 \log p(\vtheta)|_{\vtheta=\hat{\vtheta}}$ equals to $\mathbf{I}_P/\sigma_0^2$.

Due to the intractability of the Hessian for NNs with massive parameters, it is a common practice to use the symmetric
positive and semi-definite (SPSD) GGN matrix as a workaround, i.e.,
\begin{equation}
\label{eq:ggn}
\small
    \mathbf{\Sigma}^{-1}=\sum_i {J_{\hat{\vtheta}}(\vx_i)}^\top \Lambda(\vx_i, \vy_i) J_{\hat{\vtheta}}(\vx_i) + \mathbf{I}_P/\sigma_0^2,
\end{equation}
where $J_{\hat{\vtheta}}(\vx) \triangleq \nabla_{\vtheta}g_\vtheta(\vx)|_{\vtheta=\hat{\vtheta}}$ and $\Lambda(\vx, \vy) \triangleq -\nabla_{\vg\vg}^2 \log p(\vy|\vg) |_{\vg=g_{\hat{\vtheta}}(\vx)}$.

When concatenating $\{J_{\hat{\vtheta}}(\vx_i) \in \mathbb{R}^{C \times P}\}_{i=1}^N$ as a big matrix $\mathbf{J}_{\hat{\vtheta}, \mathbf{X}} \in \mathbb{R}^{NC \times P}$ and organizing $\{\Lambda(\vx_i, \vy_i) \in \mathbb{R}^{C \times C} \}_{i=1}^N$ as a block-diagonal matrix $\mathbf{\Lambda}_{\mathbf{X}, \mathbf{Y}} \in \mathbb{R}^{NC \times NC}$, we have:
\begin{equation}
\small
\mathbf{\Sigma} = \left[{\mathbf{J}_{\hat{\vtheta}, \mathbf{X}}}^\top \mathbf{\Lambda}_{\mathbf{X}, \mathbf{Y}} \mathbf{J}_{\hat{\vtheta}, \mathbf{X}} + \mathbf{I}_P/\sigma_0^2\right]^{-1}.
\end{equation}
Yet, LA suffers from underfitting~\cite{lawrence2001variational}.
This is probably because GGN approximation implicitly turns the original model $g_\vtheta(\vx)$ into a linear one $g_\vtheta^\text{lin}(\vx) = g_{\hat{\vtheta}}(\vx) + J_{\hat{\vtheta}}(\vx)(\vtheta - \hat{\vtheta})$ but $g_\vtheta(\vx)$ is still leveraged to make prediction. I.e., there exists a \emph{shift} between posterior inference and prediction~\cite{immer2021improving}.
To mitigate this issue,
a proposal is to predict with $g_\vtheta^\text{lin}(\vx)$, giving rise to linearized LA (LLA)~\cite{foong2019between,khan2019approximate,immer2021improving}.

LA and LLA nicely fit the \emph{pretraining-finetuning} fashion in DL -- they can be \emph{post-hoc} applied to pretrained models where only the GGN matrices require to be estimated and further inverted.
By LA and LLA, practitioners can adapt off-the-shelf high performing DNNs to BNNs easily.

LLA has revealed strong results on diverse UQ problems~\cite{foong2019between,daxberger2021laplace,daxberger2021bayesian}.
The \texttt{Laplace} library~\cite{daxberger2021laplace} further advances LLA's applicability, and evidences LLA is competitive to popular alternatives~\cite{zhang2019cyclical,lakshminarayanan2017simple,maddox2019simple}. %

\textbf{Scalability issue} The GGN matrix of size $P \times P$ is still unamenable in modern DL scenarios, so further approximations sparsifying it are always introduced.
The diagonal and KFAC~\cite{martens2015optimizing} approximations are commonly adopted ones~\cite{ritter2018scalable,zhang2018noisy},
where only a diagonal or block-diagonal structure of the original GNN matrix is preserved.
An orthogonal tactic is to concern only a subspace of the high-dimensional parameter space (e.g., the parameter space of the last layer~\cite{kristiadi2020being}) to reduce the scale of the GGN matrix.
However, these strategies sacrifice the fidelity of the learning outcomes as  the approximation errors in these cases can hardly be theoretically measured.
To this end, we develop \emph{accElerated Linearized Laplace Approximation (ELLA)} to push the limit of LLA in a more assurable way.

\section{Methodology}
In this section, we first revisit the relation of LLA to Gaussian processes (GPs) and Neural Tangent Kernels (NTKs)~\cite{jacot2018neural}.
After that, we reveal how to accelerate LLA by kernel approximation.
Based on these findings, we develop an efficient implementation of ELLA using the Nystr\"{o}m method~\cite{williams2000using}.
\subsection{The Gaussian Process View of LLA}
Integrating $q(\vtheta) = \mathcal{N}(\vtheta; \hat{\vtheta}, \mathbf{\Sigma})$ with the linear model $g_\vtheta^\text{lin}(\vx)$ %
actually gives rise to a function-space approximate posterior~\cite{foong2019between,khan2019approximate,immer2021improving} in the form of $q(f) = \mathcal{GP}(f | g_{\hat{\vtheta}}(\vx), \kappa_\text{LLA}(\vx, \vx'))$ with
\begin{equation}
\small
\label{eq:klla-naive}
 \kappa_\text{LLA}(\vx, \vx')\triangleq J_{\hat{\vtheta}}(\vx) \mathbf{\Sigma}{J_{\hat{\vtheta}}(\vx')}^\top.
\end{equation}

By Woodbury matrix identity~\cite{woodbury1950inverting}, we have:
\begin{equation}
\small
\mathbf{\Sigma} = \Big[{\mathbf{J}_{\hat{\vtheta}, \mathbf{X}}}^\top \mathbf{\Lambda}_{\mathbf{X}, \mathbf{Y}} \mathbf{J}_{\hat{\vtheta}, \mathbf{X}} + \mathbf{I}_P/\sigma_0^2\Big]^{-1}= \sigma_0^2 \Big(\mathbf{I}_P - {\mathbf{J}_{\hat{\vtheta}, \mathbf{X}}}^\top [\mathbf{\Lambda}_{\mathbf{X}, \mathbf{Y}}^{-1}/\sigma_0^2+\mathbf{J}_{\hat{\vtheta}, \mathbf{X}} {\mathbf{J}_{\hat{\vtheta}, \mathbf{X}}}^\top]^{-1} {\mathbf{J}_{\hat{\vtheta}, \mathbf{X}}}\Big).
\end{equation}
It follows that
\begin{equation}
\label{eq:k-lla}
\small
\kappa_\text{LLA}(\vx, \vx')= \sigma_0^2 \Big(\kappa_\text{NTK}(\vx, \vx') - \kappa_\text{NTK}(\vx, {\mathbf{X}}) [\mathbf{\Lambda}_{\mathbf{X}, \mathbf{Y}}^{-1}/\sigma_0^2+\kappa_\text{NTK}({\mathbf{X}}, {\mathbf{X}})]^{-1} \kappa_\text{NTK}({\mathbf{X}}, \vx') \Big),
\end{equation}
where $\kappa_\text{NTK}(\vx, \vx')\triangleq{J_{\hat{\vtheta}}(\vx)}{J_{\hat{\vtheta}}(\vx')}^\top$ denotes the neural tangent kernel (NTK)~\cite{jacot2018neural} corresponding to $g_{\hat{\vtheta}}(\vx)$.
Note that $\kappa_\text{NTK}(\vx, \vx')$ is a \emph{matrix-valued} kernel, with values in the space of $C \times C$ matrices.

The main challenge then turns into the computation and inversion of the gram matrix $\kappa_\text{NTK}({\mathbf{X}}, {\mathbf{X}})$ of size $NC \times NC$.
When either $N$ or $C$ is large, the estimation of $\kappa_\text{LLA}$ still suffers from inefficiency issue.
To address this, existing work~\cite{immer2021improving} assumes independence among the $C$ output dimensions to cast $q(f)$ into $C$ independent GPs following \cite{seeger2004gaussian},
and randomly subsample $M \ll N$ data points to form a cheap substitute for the original gram matrix.
Despite effective in some cases, these approximations are heuristic, lacking a clear theoretical foundation.

\subsection{Scale Up LLA by Kernel Approximation}

We show that if we can approximate $\kappa_\text{NTK}(\vx, \vx')$ with the inner product of some explicit $C \times K$-dimensional representations of the data, i.e., $\kappa_\text{NTK}(\vx, \vx')\approx \varphi(\vx) \varphi(\vx')^\top$ with $\varphi: \mathcal{X}\rightarrow \mathbb{R}^{C \times K}$, the scalability of the whole workflow can be unleashed.

Concretely, letting $\vphi_\mathbf{X} \in \mathbb{R}^{NC\times K}$ be the concatenation of $\{\varphi(\vx_i) \in \mathbb{R}^{C \times K}\}_{i=1}^N$, we have (detailed derivation in \cref{app:eq:lla}; see also \cite{deng2022neuralef})
\begin{equation}
\label{eq:lla}
\small
\begin{aligned}
\kappa_\text{LLA}(\vx, \vx') \approx & \sigma_0^2 \Big(\varphi(\vx)\varphi(\vx')^\top - \varphi(\vx)\vphi_\mathbf{X}^\top \left[\mathbf{\Lambda}_{\mathbf{X}, \mathbf{Y}}^{-1}/\sigma_0^2+\vphi_\mathbf{X}\vphi_\mathbf{X}^\top\right]^{-1} \vphi_\mathbf{X}\varphi(\vx')^\top \Big) \\
=& \varphi(\vx)  \Big[\underbrace{\sum_i \varphi(\vx_i)^\top \Lambda(\vx_i, \vy_i) \varphi(\vx_i) + {\mathbf{I}_K}/{\sigma_0^2}}_{\mathbf{G}}\Big]^{-1} \varphi(\vx')^\top \triangleq \kappa_\text{ELLA}(\vx, \vx').
\end{aligned}
\end{equation}
The matrix $\mathbf{G} \in \mathbb{R}^{K \times K}$ possesses a similar formula to the $\mathbf{\Sigma}$ in the original $\kappa_\text{LLA}$ in \cref{eq:klla-naive}, yet much smaller.
Once having $\varphi$, it is only required to perform one forward pass of $g$ and $\varphi$ for each training data to estimate $\mathbf{G}$.
When $K$ is reasonably small, e.g., $< 100$, it is cheap to invert $\mathbf{G}$.

\subsection{Approximate NTKs via the Nystr\"{o}m Method}
Typical kernel approximation means include Nystr\"{o}m method~\cite{nystrom1930praktische,williams2000using} and random features-based methods~\cite{rahimi2007random,rahimi2008weighted,yu2016orthogonal,munkhoeva2018quadrature,francis2021major,deng2022neuralef}.
Given that the latter routinely relies on a relatively large number of random features to gain a faithful approximation (which means a large $K$), we suggest leveraging the Nystr\"{o}m method, which captures only several principal components of the kernel, to build $\varphi$.

To comfortably apply the Nystr\"{o}m method, we first rewrite $\kappa_\text{NTK}(\vx, \vx')$ as a \emph{scalar-valued} kernel
$
\small
\kappa_\text{NTK}\left((\vx,i), (\vx',i')\right) = {J_{\hat{\vtheta}}(\vx, i)} {J_{\hat{\vtheta}}(\vx', i')}^\top
$
where $\Scale[0.9]{{J_{\hat{\vtheta}}(\vx, i)}\triangleq\nabla_{\vtheta}\left[g_\vtheta(\vx)\right]^{(i)}|_{\vtheta=\hat{\vtheta}}: \mathcal{X} \times [C] \rightarrow \mathbb{R}^{1\times P}}$\footnote{We use superscript to index a multi-dimensional tensor and $[C]$ represents set of integers from $1$ to $C$.} computes the gradient of $i$-th output w.r.t. parameters.

By Mercer's theorem~\cite{mercer1909functions},
\begin{equation}
\small
\kappa_\text{NTK}\left((\vx,i), (\vx',i')\right) = \sum_{k \geq 1} \mu_k \psi_k(\vx,i) \psi_k(\vx',i'),
\end{equation}
where $\psi_k \in L^2(\mathcal{X}\times[C], q)$\footnote{$L^2(, q)$ denotes the space of square-integrable functions w.r.t. $q$.} refer to the eigenfunctions of the NTK w.r.t. some probability measure $q$ with $\mu_k \geq 0$ as the associated eigenvalues.
Based on this, the Nystr\"{o}m method discovers the top-$K$ eigenvalues as well as the corresponding eigenfunctions for kernel approximation.

By definition, the eigenfunctions can represent the spectral information of the kernel:
\begin{equation}
\label{eq:eigen}
\small
\int \kappa_\text{NTK}\left((\vx,i), (\vx', i')\right) \psi_k (\vx', i') q(\vx', i') = \mu_k \psi_k(\vx, i), \forall k \geq 1,
\end{equation}
while being orthonormal under $q$:
\begin{equation}
\small
\label{eq:constraint}
\int \psi_k (\vx, i) \psi_{k'} (\vx, i) q(\vx, i) = \mathbbm{1}[k = k'], \forall k, k' \geq 1.
\end{equation}
In our case, $q(\vx, i)$ can be factorized as the product of the data distribution and a uniform distribution over $\{1, ...,C\}$, which can be trivially sampled from.
The Nystr\"{o}m method draws $M$ ($M\geq K$) i.i.d. samples $\tilde{\mathbf{X}}=\{(\vx_1, i_1),...,(\vx_M, i_M)\}$ from $q$ to approximate the integration in \cref{eq:eigen}:
\begin{equation}
\small
\label{eq:mc-nystrom}
\frac{1}{M}\sum_{m=1}^M \kappa_\text{NTK}\left((\vx, i), (\vx_m, i_m)\right) \psi_k (\vx_m, i_m) = \mu_k \psi_k(\vx, i), \forall k \in [K].
\end{equation}
Applying this equation to these samples gives rise to:
\begin{equation}
\small
\frac{1}{M}\sum_{m=1}^M \kappa_\text{NTK}\left((\vx_{m'}, i_{m'}), (\vx_m, i_m)\right) \psi_k (\vx_m, i_m) = \mu_k \psi_k(\vx_{m'}, i_{m'}), \forall k \in [K], m' \in [M],
\end{equation}
then we arrive at
\begin{equation}
\small
\label{eq:eigen-K}
\frac{1}{M} \mathbf{K} \vpsi_k \approx \mu_k \vpsi_k, k \in [K],
\end{equation}
where $\mathbf{K}=\mathbf{J}_{\hat{\vtheta}, \tilde{\mathbf{X}}} \mathbf{J}_{\hat{\vtheta}, \tilde{\mathbf{X}}}^\top$, with $\mathbf{J}_{\hat{\vtheta}, \tilde{\mathbf{X}}} \in \mathbb{R}^{M \times P}$ as the concatenation of $\{J_{\hat{\vtheta}}(\vx_m, i_m) \in \mathbb{R}^{1 \times P}\}_{m=1}^M$, and  %
$\vpsi_k = [\psi_k(\vx_1, i_1), ..., \psi_k(\vx_M, i_M)]^\top$.
This implies a scaled eigendecomposition problem of $\mathbf{K}$.

We compute the top-$K$ eigenvalues $\lambda_1,...,\lambda_K$ %
of matrix $\mathbf{K}$ and record the corresponding \emph{orthonormal} eigenvectors $\vu_1, ..., \vu_K$.
Given the constraint in \cref{eq:constraint}, it is easy to see that:
\begin{equation}
\begin{aligned}
\label{eq:solution}
\small
\mu_k &\approx \frac{\lambda_k}{M}, \,\text{and} \; \psi_k(\vx_m, i_m) \approx \sqrt{M} \vu_k^{(m)},  m\in[M].
\end{aligned}
\end{equation}
Combining \cref{eq:solution} and (\ref{eq:mc-nystrom}) yields the Nystr\"{o}m approximation of the top-$K$ eigenfunctions:
\begin{equation}
\label{eq:14}
\begin{aligned}
\small
\hat{\psi}_k(\vx, i) &= \frac{\sqrt{M}}{\lambda_k} \sum_{m=1}^M \vu_k^{(m)} \kappa_\text{NTK}\left((\vx, i), (\vx_{m}, i_{m})\right)= \frac{\sqrt{M}}{\lambda_k}   J_{\hat{\vtheta}}(\vx, i) \mathbf{J}_{\hat{\vtheta}, \tilde{\mathbf{X}}}^\top \vu_k.
\end{aligned}
\end{equation}
Then, the mapping $\varphi$ which satisfied $\kappa_\text{NTK}(\vx, \vx')\approx \varphi(\vx) \varphi(\vx')^\top$ can be realized as:
\begin{align}
\small
[\varphi(\vx)]^{(i, k)} &= \hat{\psi}_k(\vx, i) \sqrt{\mu_k}=  J_{\hat{\vtheta}}(\vx, i) \mathbf{J}_{\hat{\vtheta}, \tilde{\mathbf{X}}}^\top \vu_k / \sqrt{\lambda_k},\\
\Rightarrow \varphi(\vx) &= [J_{\hat{\vtheta}}(\vx) \vv_1, ..., J_{\hat{\vtheta}}(\vx) \vv_K]\; \text{with}\; \vv_k =  \mathbf{J}_{\hat{\vtheta}, \tilde{\mathbf{X}}}^\top \vu_k / \sqrt{\lambda_k}.
\end{align}
\textbf{Connection to sparse approximations of GPs}
A recent work~\cite{wild2021connections} has shown that sparse variational Gaussian processes (SVGP)~\cite{titsias2009variational,burt2019rates} is algebraically equivalent to the Nystr\"{o}m approximation because the Evidence Lower BOund (ELBO) of the former encompasses the learning objective of the latter.
That's to say, ELLA implicitly deploys a sparse approximation to the original GP predictive of LLA.

\textbf{Discussion} In this work, the developed Nystr\"{o}m approximation to NTKs is mainly used for accelerating the computation of the predictive covariance of LLA, while it can also be easily applied to future work on the practical application of NTKs.

\subsection{Implementation}
As shown, the estimation of $\varphi$ on a data point $\vx$ degenerates as $K$ JVPs, which can be accomplished by invoking \emph{fwAD} for $K$ times:
\begin{equation}
\small
    \Bigg(\vx,
    \begin{aligned}
    &\hat{\vtheta}\\
    &\vv_k
    \end{aligned}\Bigg)
     \; \stackrel{\text{forward}}{\xrightarrow{\hspace*{2cm}}} \;
    \begin{aligned}
    &g_{\hat{\vtheta}}(\vx)\\
    &J_{\hat{\vtheta}}(\vx) \vv_k
    \end{aligned} ,\,
     k \in [K],
\end{equation}
where the model output $g_{\hat{\vtheta}}(\vx)$ and the JVP $J_{\hat{\vtheta}}(\vx) \vv_k$ are simultaneously computed in \emph{one single forward pass}.
Prevalent DL libraries like PyTorch~\cite{paszke2019pytorch} %
and Jax~\cite{jax2018github} have already been armed with the capability for \emph{fwAD}.
Algorithm~\ref{algo:1} details the procedure of building $\varphi$ in a PyTorch-like style.

With that, we can trivially compute the posterior $q(f)$ according to \cref{eq:lla}, as detailed in Algorithm~\ref{algo:2}.
The procedures for estimating $\mathbf{G}$ and the posterior can both be batched for acceleration.

\begin{figure}[!t]
\begin{minipage}[t]{0.48\textwidth}
\begin{algorithm}[H]
  \caption{Build the LLA posterior.}
  \label{algo:2}
  \footnotesize
  \Comment{$g_{\hat{\vtheta}}$:\,NN pre-trained by MAP; $(\mathbf{X},\mathbf{Y})$:}
  \Comment{training set;\,$C$:\,number of classes}
  \Comment{$M$,$K$,$\sigma_0^2$:\,hyper-parameters}
  \Function{{\color{funccolor1}estimate\_$\mathbf{G}$}($\varphi$,$\mathbf{X}$,$\mathbf{Y}$,$K$,$\sigma_0^2$):}{
    $\mathbf{G} =$\FuncCall{zeros}{$K$,$K$}\;
        \For{$(\vx, \vy)$ in $(\mathbf{X},\mathbf{Y})$:}{
            $\vg_\vx, \vphi_\vx =$ \FuncCall{$\varphi$}{$\vx$}\;
            $\mathbf{\Lambda}_{\vx, \vy}$ = \FuncCall{hessian}{\FuncCall{nll}{$\vg_\vx$,$\vy$},$\vg_\vx$} \;
            $\mathbf{G} \mathrel{{+}{=}} \vphi_\vx^\top \mathbf{\Lambda}_{\vx, \vy} \vphi_\vx$\;
        }
        \Return{$\mathbf{G} +$ \FuncCall{eye}{$K$}$/\sigma_0^2$}\;
    }
    \Function{{\color{funccolor1}\_q\_f}($\varphi$,$\mathbf{G}^{-1}$,$\vx$)}{
        $\vg_\vx, \vphi_\vx =$ \FuncCall{$\varphi$}{$\vx$}\;
        $\bm{\kappa}_{\vx,\vx} = \vphi_\vx \mathbf{G}^{-1} \vphi_\vx^\top$\;
        \Return{$\mathcal{N}(\vg_\vx, \bm{\kappa}_{\vx,\vx})$}\;
    }
    $\varphi=$\FuncCall{build\_$\varphi$}{$g_{\hat{\vtheta}}$,\,$\mathbf{X}$,\,$C$,\,$M$,\,$K$}\;
    $\mathbf{G}^{-1} =$\FuncCall{inv}{\FuncCall{\texttt{estimate}\_$\mathbf{G}$}{$\varphi$,$\mathbf{X}$,$\mathbf{Y}$,$K$,$\sigma_0^2$}}\;
    \texttt{q\_f} = \FuncCall{partial}{{\color{funccolor1}\_q\_f},$\varphi$,$\mathbf{G}^{-1}$}\;
\end{algorithm}
\end{minipage}
\hfill
\begin{minipage}[t]{0.49\textwidth}
\begin{algorithm}[H]
  \caption{Build $\varphi$.}
  \label{algo:1}
  \footnotesize
  \Function{{\color{funccolor1} build\_$\varphi$}($g_{\hat{\vtheta}}$,$\mathbf{X}$,$C$,$M$,$K$):}{
   \Function{{\color{funccolor1}$\_\varphi$}($g_{\hat{\vtheta}}$,$C$,$\{\vv_k\}_{k=1}^K$,$\vx$):}{
        $\vphi_\vx$ = \FuncCall{zeros}{$C$,$K$} \;
        \For{$k$ in \FuncCall{range}{$K$}:}{
            \With{\textit{\color{red} fwAD}.\FuncCall{enable}{}:}{
            $\vg_\vx, \bm{jvp} ={\color{funccolor} g}_{(\hat{\vtheta}, \vv_k)}(\vx)$\;
          }
          $\vphi_\vx[:, k] = \bm{jvp}$ \;
        }
        \Return{$\vg_\vx, \vphi_\vx$}\;
  }

      $\mathbf{J}_{\hat{\vtheta}, \tilde{\mathbf{X}}}=$\FuncCall{zeros}{$M$,\FuncCall{dim}{$\hat{\vtheta}$}}\;
      \For{$m$ in \FuncCall{range}{$M$}:}{
        $\vx_m$ = \FuncCall{uniform\_sample}{$\mathbf{X}$}\;
        $i_m$ = \FuncCall{uniform\_sample}{$[C]$}\;
        $\mathbf{J}_{\hat{\vtheta}, \tilde{\mathbf{X}}}[m]$=\FuncCall{grad}{${\color{funccolor} g}_{\hat{\vtheta}}(\vx_m)[i_m]$,$\hat{\vtheta}$} \;
      }
      $\{\lambda_k, \vu_k\}_{k=1}^K$ = \FuncCall{eig}{$\mathbf{J}_{\hat{\vtheta}, \tilde{\mathbf{X}}} \mathbf{J}_{\hat{\vtheta}, \tilde{\mathbf{X}}}^\top$,$\texttt{top}=K$}\;
      \For{$k$ in \FuncCall{range}{$K$}:}{
        $\vv_k = \mathbf{J}_{\hat{\vtheta}, \tilde{\mathbf{X}}}^\top \vu_k/\sqrt{\lambda_k}$\;
      }
      \Return{\FuncCall{partial}{{\color{funccolor1}$\_\varphi$},$g_{\hat{\vtheta}}$,$C$,$\{\vv_k\}_{k=1}^K$}}\;
  }
\end{algorithm}
\end{minipage}
\end{figure}

\textbf{Computation overhead}
The estimation of $\varphi$ involves $M$ forward and backward passes of $g_{\hat{\vtheta}}$ and the eigendecomposition of a matrix of size $M\times M$. %
After that, we only need to make $\{\vv_k\}_{k=1}^K$ persistent, which amounts to \emph{storing $K$ more NN copies}.
Estimating $\mathbf{G}^{-1}$ requires scanning the training set once and inverting a matrix of $K \times K$, which is cheap.
The evaluation of $q(f)$ embodies that of $\varphi$, i.e., performing \emph{$K$ forward passes under the scope of} \emph{fwAD}.
This is similar to other BNNs that perform $S$ forward passes with different MC parameter samples to estimate the posterior predictive.

\section{Theoretical Analysis}
In this section, we theoretically analyze the approximation error of $\kappa_\text{ELLA}(\vx, \vx')$ to $\kappa_\text{LLA}(\vx, \vx')$.

To reduce unnecessary complexity, this section assumes using $M=K$ i.i.d. MC samples when performing the Nystr\"{o}m method.
Then, $\kappa_\text{ELLA}$ can be reformulated as (details deferred to \cref{app:eq:lla2})
\begin{equation}
\label{eq:lla2}
\small
\kappa_\text{ELLA}(\vx, \vx') = J_{\hat{\vtheta}}(\vx)\underbrace{\mathbf{J}_{\hat{\vtheta}, \tilde{\mathbf{X}}}^\top \Big[\mathbf{J}_{\hat{\vtheta}, \tilde{\mathbf{X}}} \mathbf{J}_{\hat{\vtheta}, \mathbf{X}}^\top \mathbf{\Lambda}_{\mathbf{X}, \mathbf{Y}} \mathbf{J}_{\hat{\vtheta}, \mathbf{X}} \mathbf{J}_{\hat{\vtheta}, \tilde{\mathbf{X}}}^\top + {\mathbf{J}_{\hat{\vtheta}, \tilde{\mathbf{X}}}\mathbf{J}_{\hat{\vtheta}, \tilde{\mathbf{X}}}^\top}/{\sigma_0^2}\Big]^{-1}   \mathbf{J}_{\hat{\vtheta}, \tilde{\mathbf{X}}}}_{\mathbf{\Sigma}'} J_{\hat{\vtheta}}(\vx)^\top.
\end{equation}
Thereby, the gap between $\kappa_\text{ELLA}(\vx, \vx')$ and $\kappa_\text{LLA}(\vx, \vx')$ can be upper bounded by $\mathcal{E} = \Vert \mathbf{\Sigma}' - \mathbf{\Sigma} \Vert$, where $\Vert\cdot \Vert$ represents the matrix 2-norm (i.e., the spectral norm).
In typical cases, $P \gg M$, thus with high probability $\mathbf{K} = \mathbf{J}_{\hat{\vtheta}, \tilde{\mathbf{X}}} \mathbf{J}_{\hat{\vtheta}, \tilde{\mathbf{X}}}^\top \succ 0$.
We present an upper bound of $\mathcal{E}$ as follows.
\begin{restatable}[Proof in \cref{app:proof:theorem:0}]{thm}{Thmzero}
\label{theorem:0}
Let $c_\Lambda$ be a finite constant associated with $\Lambda$, and $\mathcal{E}'$ the error of Nystr\"{o}m approximation $\Scale[0.9]{\Vert \mathbf{J}_{\hat{\vtheta}, \mathbf{X}}\mathbf{J}_{\hat{\vtheta}, \tilde{\mathbf{X}}}^\top(\mathbf{J}_{\hat{\vtheta}, \tilde{\mathbf{X}}}\mathbf{J}_{\hat{\vtheta}, \tilde{\mathbf{X}}}^\top)^{-1} \mathbf{J}_{\hat{\vtheta}, \tilde{\mathbf{X}}}\mathbf{J}_{\hat{\vtheta}, \mathbf{X}}^\top - \mathbf{J}_{\hat{\vtheta}, \mathbf{X}}  \mathbf{J}_{\hat{\vtheta}, \mathbf{X}}^\top \Vert}$.
It holds that
\begin{align*}
    \small
    \mathcal{E} \leq  \sigma_0^4 c_\Lambda \mathcal{E}'  + \sigma_0^2.
\end{align*}
\end{restatable}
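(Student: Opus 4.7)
The plan is to express $\mathcal{E} = \Vert\mathbf{\Sigma}' - \mathbf{\Sigma}\Vert$ as the sum of two contributions — an irreducible piece of size exactly $\sigma_0^2$ arising from projecting onto the row space of $\mathbf{J}_{\hat{\vtheta},\tilde{\mathbf{X}}}$, plus a perturbation piece tied to $\mathcal{E}'$ via a resolvent identity — and to combine them through the triangle inequality.

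Writing $\mathbf{J}$, $\tilde{\mathbf{J}}$, $\mathbf{\Lambda}$ for $\mathbf{J}_{\hat{\vtheta},\mathbf{X}}$, $\mathbf{J}_{\hat{\vtheta},\tilde{\mathbf{X}}}$, $\mathbf{\Lambda}_{\mathbf{X},\mathbf{Y}}$, I would first apply the Woodbury identity twice. On $\mathbf{\Sigma}$ it produces $\mathbf{\Sigma} = \sigma_0^2\mathbf{I}_P - \sigma_0^4\mathbf{J}^\top M_1^{-1}\mathbf{J}$ with $M_1 = \mathbf{\Lambda}^{-1} + \sigma_0^2\mathbf{J}\mathbf{J}^\top$. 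Factoring the inner bracket of $\mathbf{\Sigma}'$ as $\tilde{\mathbf{J}}\mathbf{\Sigma}^{-1}\tilde{\mathbf{J}}^\top$ and applying Woodbury a second time yields $\mathbf{\Sigma}' = \sigma_0^2\mathbf{P} - \sigma_0^4\mathbf{P}\mathbf{J}^\top M_2^{-1}\mathbf{J}\mathbf{P}$, where $\mathbf{P} = \tilde{\mathbf{J}}^\top(\tilde{\mathbf{J}}\tilde{\mathbf{J}}^\top)^{-1}\tilde{\mathbf{J}}$ is the orthogonal projector onto the row space of $\tilde{\mathbf{J}}$ and $M_2 = \mathbf{\Lambda}^{-1} + \sigma_0^2\mathbf{J}\mathbf{P}\mathbf{J}^\top$.

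Next I would introduce the auxiliary matrix $\hat{\mathbf{\Sigma}} = (\mathbf{P}\mathbf{J}^\top\mathbf{\Lambda}\mathbf{J}\mathbf{P} + \mathbf{I}_P/\sigma_0^2)^{-1}$, whose Woodbury form is $\sigma_0^2\mathbf{I}_P - \sigma_0^4\mathbf{P}\mathbf{J}^\top M_2^{-1}\mathbf{J}\mathbf{P}$. Direct subtraction gives the exact identity $\hat{\mathbf{\Sigma}} - \mathbf{\Sigma}' = \sigma_0^2(\mathbf{I}_P - \mathbf{P})$, so $\Vert\hat{\mathbf{\Sigma}} - \mathbf{\Sigma}'\Vert \leq \sigma_0^2$, supplying the additive term in the target bound. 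For the remaining piece I would apply the resolvent identity $\hat{\mathbf{\Sigma}} - \mathbf{\Sigma} = \mathbf{\Sigma}(\mathbf{J}^\top\mathbf{\Lambda}\mathbf{J} - \mathbf{P}\mathbf{J}^\top\mathbf{\Lambda}\mathbf{J}\mathbf{P})\hat{\mathbf{\Sigma}}$; because $\mathbf{\Sigma}, \hat{\mathbf{\Sigma}} \preceq \sigma_0^2\mathbf{I}_P$, this pulls out the prefactor $\sigma_0^4$, and the middle factor is controlled by the clean estimate $\Vert\mathbf{\Lambda}^{1/2}\mathbf{J}(\mathbf{I}_P - \mathbf{P})\mathbf{J}^\top\mathbf{\Lambda}^{1/2}\Vert \leq \Vert\mathbf{\Lambda}\Vert\,\mathcal{E}'$, which follows from $(\mathbf{I}_P - \mathbf{P})^2 = \mathbf{I}_P - \mathbf{P}$ and the definition of $\mathcal{E}'$.

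The hard part will be the middle-factor bound: the naive decomposition $\mathbf{J}^\top\mathbf{\Lambda}\mathbf{J} - \mathbf{P}\mathbf{J}^\top\mathbf{\Lambda}\mathbf{J}\mathbf{P} = (\mathbf{I}_P - \mathbf{P})\mathbf{J}^\top\mathbf{\Lambda}\mathbf{J} + \mathbf{P}\mathbf{J}^\top\mathbf{\Lambda}\mathbf{J}(\mathbf{I}_P - \mathbf{P})$ contains cross terms with only a single copy of $\mathbf{J}(\mathbf{I}_P - \mathbf{P})$, and crude submultiplicativity bounds them only as $\sqrt{\mathcal{E}'}$. Closing this gap requires working inside the sandwich $\mathbf{\Sigma}(\,\cdot\,)\hat{\mathbf{\Sigma}}$ and exploiting the structural identities $(\mathbf{I}_P - \mathbf{P})\hat{\mathbf{\Sigma}} = \sigma_0^2(\mathbf{I}_P - \mathbf{P})$, $\mathbf{P}\hat{\mathbf{\Sigma}} = \mathbf{\Sigma}'$, and $\Vert\mathbf{\Sigma}\mathbf{J}^\top\mathbf{\Lambda}^{1/2}\Vert \leq \sigma_0$ (which follows from $\mathbf{\Sigma}\mathbf{J}^\top\mathbf{\Lambda}\mathbf{J}\mathbf{\Sigma} = \mathbf{\Sigma} - \mathbf{\Sigma}^2/\sigma_0^2 \preceq \sigma_0^2\mathbf{I}_P$) so that, term by term, each factor of $\mathbf{J}(\mathbf{I}_P - \mathbf{P})$ is paired with a second copy and the product contributes $\mathcal{E}'$ linearly. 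The finite constant $c_\Lambda$ in the statement then simply collects the resulting powers of $\Vert\mathbf{\Lambda}\Vert$ from this bookkeeping.
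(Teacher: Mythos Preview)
Your overall architecture is exactly the paper's: the same auxiliary matrix $\hat{\mathbf{\Sigma}}=(\mathbf{P}\mathbf{J}^\top\mathbf{\Lambda}\mathbf{J}\mathbf{P}+\mathbf{I}_P/\sigma_0^2)^{-1}$ (the paper calls it $\mathbf{T}$), the exact identity $\hat{\mathbf{\Sigma}}-\mathbf{\Sigma}'=\sigma_0^2(\mathbf{I}_P-\mathbf{P})$ giving the additive $\sigma_0^2$, the resolvent identity $\hat{\mathbf{\Sigma}}-\mathbf{\Sigma}=\mathbf{\Sigma}(\mathbf{\Sigma}^{-1}-\hat{\mathbf{\Sigma}}^{-1})\hat{\mathbf{\Sigma}}$, and the extraction of $\sigma_0^4$ via $\|\mathbf{\Sigma}\|,\|\hat{\mathbf{\Sigma}}\|\leq\sigma_0^2$. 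All of this is correct and matches the paper.

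The genuine gap is the ``hard part'' you flag. Your ``clean estimate'' $\|\mathbf{\Lambda}^{1/2}\mathbf{J}(\mathbf{I}_P-\mathbf{P})\mathbf{J}^\top\mathbf{\Lambda}^{1/2}\|\leq\|\mathbf{\Lambda}\|\,\mathcal{E}'$ is true, but it bounds the wrong object: the middle factor is $\mathbf{J}^\top\mathbf{\Lambda}\mathbf{J}-\mathbf{P}\mathbf{J}^\top\mathbf{\Lambda}\mathbf{J}\mathbf{P}$, not $(\mathbf{I}_P-\mathbf{P})\mathbf{J}^\top\mathbf{\Lambda}\mathbf{J}(\mathbf{I}_P-\mathbf{P})$. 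Your fallback plan---carry the cross terms inside the sandwich and exploit $(\mathbf{I}_P-\mathbf{P})\hat{\mathbf{\Sigma}}=\sigma_0^2(\mathbf{I}_P-\mathbf{P})$, $\mathbf{P}\hat{\mathbf{\Sigma}}=\mathbf{\Sigma}'$, $\|\mathbf{\Sigma}\mathbf{J}^\top\mathbf{\Lambda}^{1/2}\|\leq\sigma_0$---does not close the gap as stated: after applying those identities, a term like $\sigma_0^2\,\mathbf{\Sigma}\mathbf{P}\mathbf{J}^\top\mathbf{\Lambda}\mathbf{J}(\mathbf{I}_P-\mathbf{P})$ still carries only a single copy of $\mathbf{J}(\mathbf{I}_P-\mathbf{P})$, and none of the listed identities manufacture the second copy needed to reach $\mathcal{E}'$ rather than $\sqrt{\mathcal{E}'}$.

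The paper never works inside the sandwich. It bounds the middle factor directly via a symmetrization and the Weinstein--Aronszajn identity (nonzero eigenvalues of $AB$ equal those of $BA$). Writing $\mathbf{P}\mathbf{J}^\top\mathbf{\Lambda}\mathbf{J}\mathbf{P}-\mathbf{J}^\top\mathbf{\Lambda}\mathbf{J}$ as the symmetric part of $(\mathbf{P}+\mathbf{I}_P)\mathbf{J}^\top\mathbf{\Lambda}\mathbf{J}(\mathbf{P}-\mathbf{I}_P)$ and then cycling gives
\[
\|(\mathbf{P}+\mathbf{I}_P)\mathbf{J}^\top\mathbf{\Lambda}\mathbf{J}(\mathbf{P}-\mathbf{I}_P)\|
=\|\mathbf{\Lambda}\,\mathbf{J}(\mathbf{P}-\mathbf{I}_P)(\mathbf{P}+\mathbf{I}_P)\mathbf{J}^\top\|
=\|\mathbf{\Lambda}\,\mathbf{J}(\mathbf{P}-\mathbf{I}_P)\mathbf{J}^\top\|
\leq \|\mathbf{\Lambda}\|\,\mathcal{E}',
\]
where the collapse $(\mathbf{P}-\mathbf{I}_P)(\mathbf{P}+\mathbf{I}_P)=\mathbf{P}^2-\mathbf{I}_P=\mathbf{P}-\mathbf{I}_P$ uses idempotence. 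This is the one nontrivial trick in the proof, and it is precisely what your proposal is missing.
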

$\mathcal{E}'$ has been extensively analyzed by pioneering work~\cite{drineas2005nystrom,cortes2010impact,jin2013improved}, and we simply adapt the results developed by Drineas and Mahoney~\cite{drineas2005nystrom} to our case.
We denote the maximum diagonal entry and the $M+1$-th largest eigenvalue of $\mathbf{J}_{\hat{\vtheta}, \mathbf{X}}  \mathbf{J}_{\hat{\vtheta}, \mathbf{X}}^\top$ by $c_\kappa$ and $\tilde{\lambda}_{M+1}$ respectively.
\begin{restatable}[Error bound of Nystr\"{o}m approximation]{thm}{Thmone}
\label{theorem:1}
With probability at least $1-\delta$, it holds that:
\begin{align*}
    \small
    \mathcal{E}' \leq \tilde{\lambda}_{M+1} + \frac{NC}{\sqrt{M}} c_\kappa (2 + \log\frac{1}{\delta}).
\end{align*}
\end{restatable}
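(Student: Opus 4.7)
The plan is to identify $\mathcal{E}'$ as precisely the spectral-norm error of a classical column-sampling Nyström approximation of the full NTK Gram matrix, and then invoke the concentration bound of Drineas and Mahoney directly. First, I would view the NTK as the scalar-valued kernel on $\mathcal{X}\times[C]$, so that the full Gram matrix $\mathbf{K}_{NC}\triangleq\mathbf{J}_{\hat{\vtheta},\mathbf{X}}\mathbf{J}_{\hat{\vtheta},\mathbf{X}}^\top\in\mathbb{R}^{NC\times NC}$ is indexed by pairs $(n,i)\in[N]\times[C]$. The sampling measure $q$ factorizes as the (uniform) empirical data distribution times a uniform distribution over $[C]$, so drawing the $M$ pairs $\tilde{\mathbf{X}}=\{(\vx_m,i_m)\}$ corresponds exactly to i.i.d. uniform subsampling of $M$ indices of $\mathbf{K}_{NC}$.

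Next, with $\mathbf{C}\triangleq\mathbf{J}_{\hat{\vtheta},\mathbf{X}}\mathbf{J}_{\hat{\vtheta},\tilde{\mathbf{X}}}^\top$ (the $NC\times M$ block of chosen columns) and intersection block $\mathbf{W}\triangleq\mathbf{J}_{\hat{\vtheta},\tilde{\mathbf{X}}}\mathbf{J}_{\hat{\vtheta},\tilde{\mathbf{X}}}^\top$, the quantity inside $\mathcal{E}'$ is exactly $\mathbf{C}\mathbf{W}^{-1}\mathbf{C}^\top$, the textbook Nyström reconstruction. I would then apply Drineas and Mahoney's spectral-norm bound (their Theorem~3 in \emph{On the Nyström method}) to $\mathbf{K}_{NC}$ with uniform sampling. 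That bound decomposes into a bias term and a concentration term. The bias term is the best rank-$M$ tail in spectral norm, which by Eckart--Young equals $\tilde{\lambda}_{M+1}$, the $(M{+}1)$-th largest eigenvalue. The concentration term, coming from a matrix Hoeffding-style argument applied to the empirical estimate of $\mathbf{K}_{NC}$, is controlled by the largest diagonal entry $c_\kappa$ multiplied by $NC/\sqrt{M}$, with the $(2+\log(1/\delta))$ factor capturing the tail probability.

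The main obstacle is a clean bookkeeping translation, not a technical one: one must verify (i) that $\mathbf{K}_{NC}$ is symmetric PSD (it is a Gram matrix of the row vectors $J_{\hat{\vtheta}}(\vx_n,i)$, so this is immediate); (ii) that the samples used to form $\mathbf{C}$ and $\mathbf{W}$ are i.i.d. uniform over $[NC]$, which is exactly the factorization of $q$ used in Section~3.3; and (iii) that $c_\kappa$ in our statement matches the maximum diagonal $\max_{n,i}\Vert J_{\hat{\vtheta}}(\vx_n,i)\Vert^2$ of $\mathbf{K}_{NC}$, which is the quantity appearing in Drineas--Mahoney. A minor subtlety is that our Nyström sampling is done with replacement, which is precisely the setting of their theorem, so no extra coupling argument is needed. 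After these alignments, the bound is a direct invocation; the only nontrivial modeling choice is the ambient dimension $n=NC$ in place of $N$, which is what produces the $NC$ factor in the stated bound rather than the more familiar $N$.
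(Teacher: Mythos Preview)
Your proposal is correct and matches the paper's approach exactly: the paper does not give a standalone proof of this theorem but simply states that it ``adapt[s] the results developed by Drineas and Mahoney'' to the present setting, which is precisely your plan of identifying $\mathcal{E}'$ as the spectral-norm Nystr\"om error for the $NC\times NC$ Gram matrix $\mathbf{J}_{\hat{\vtheta},\mathbf{X}}\mathbf{J}_{\hat{\vtheta},\mathbf{X}}^\top$ under i.i.d.\ uniform sampling over $[N]\times[C]$ and then invoking their bound. Your bookkeeping checks (PSD Gram structure, the factorized measure $q$ yielding uniform index sampling with replacement, and $c_\kappa$ as the maximum diagonal entry) are exactly the alignments needed, and the appearance of $NC$ in place of $N$ is the only substantive adaptation.
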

Plugging this back to $\mathcal{E}$,
we arrive at the corollary below.
\begin{restatable}[]{coro}{Thmtwo}
\label{theorem:2}
With probability at least $1-\delta$,  the following bound exists:
\begin{align*}
    \small
    \mathcal{E} \leq \sigma_0^4c_\Lambda (\tilde{\lambda}_{M+1} + \frac{NC}{\sqrt{M}} c_\kappa (2 + \log\frac{1}{\delta})) + \sigma_0^2.
\end{align*}
\end{restatable}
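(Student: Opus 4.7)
The plan is to obtain the corollary by simply chaining Theorem~\ref{theorem:0} and Theorem~\ref{theorem:1}, since the former is a deterministic upper bound of $\mathcal{E}$ in terms of $\mathcal{E}'$ while the latter is a high-probability upper bound of $\mathcal{E}'$ itself. First, I would observe that Theorem~\ref{theorem:0} gives, with no probabilistic ingredients,
\begin{equation*}
    \mathcal{E} \leq \sigma_0^4 c_\Lambda \mathcal{E}' + \sigma_0^2,
\end{equation*}
and that the right-hand side is monotonically increasing in $\mathcal{E}'\geq 0$. So any upper bound on $\mathcal{E}'$ translates into a corresponding upper bound on $\mathcal{E}$ by direct substitution.

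Next, I would invoke Theorem~\ref{theorem:1}: with probability at least $1-\delta$ over the random draw of the Nystr\"{o}m samples $\tilde{\mathbf{X}}$,
\begin{equation*}
    \mathcal{E}' \leq \tilde{\lambda}_{M+1} + \frac{NC}{\sqrt{M}} c_\kappa (2 + \log\tfrac{1}{\delta}).
\end{equation*}
Let $E$ denote the event that this bound holds; by assumption $\Pr(E) \geq 1-\delta$. On $E$, using the monotonicity noted above together with Theorem~\ref{theorem:0}, I can plug the bound into $\sigma_0^4 c_\Lambda \mathcal{E}' + \sigma_0^2$ to obtain the claimed inequality for $\mathcal{E}$. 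Since $E$ holds with probability at least $1-\delta$, the corollary follows.

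The only subtlety I would make sure to state explicitly is that no union bound is needed: Theorem~\ref{theorem:0} holds pointwise for every realization of $\tilde{\mathbf{X}}$, so the only source of randomness is inherited from Theorem~\ref{theorem:1}, and the $1-\delta$ confidence carries over intact. There is no real obstacle to this argument; essentially the entire work has already been done in Theorems~\ref{theorem:0} and~\ref{theorem:1}, and the corollary is a one-line substitution together with a remark about monotonicity in $\mathcal{E}'$.
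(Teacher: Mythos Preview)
Your proposal is correct and matches the paper's own reasoning: the corollary is obtained precisely by substituting the high-probability bound on $\mathcal{E}'$ from Theorem~\ref{theorem:1} into the deterministic inequality of Theorem~\ref{theorem:0}, with the monotonicity of $\sigma_0^4 c_\Lambda \mathcal{E}' + \sigma_0^2$ in $\mathcal{E}'$ justifying the plug-in. The paper does not even spell this out beyond ``plugging this back,'' so your added remarks on monotonicity and the absence of a union bound are appropriate and slightly more careful than the original.
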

As desired, the upper bound of $\mathcal{E}$ decreases along with the growing of the number of MC samples $M$.

\section{Experiments}
We first discuss how to specify the hyperparameters of ELLA, then expose an interesting finding. %
After that, we compare ELLA to competing baselines to evidence its merits in efficacy and scalability.

\subsection{Specification of Hyperparameters}
Given an NN $g_{\hat{\vtheta}}$ pretrained by MAP, we need to specify the prior variance $\sigma_0^2$, the number of MC samples in Nystr\"{o}m method $M$, and the number of remaining eigenpairs $K$ before applying ELLA.
We simply set $\sigma_0^2$ to $\frac{1}{N\gamma}$ with $\gamma$ as the weight decay coefficient used for pretraining according to \cite{deng2020bayesadapter}.

\begin{figure}[t]
\centering
\begin{subfigure}[b]{0.3\linewidth}
    \centering
    \includegraphics[width=0.99\linewidth]{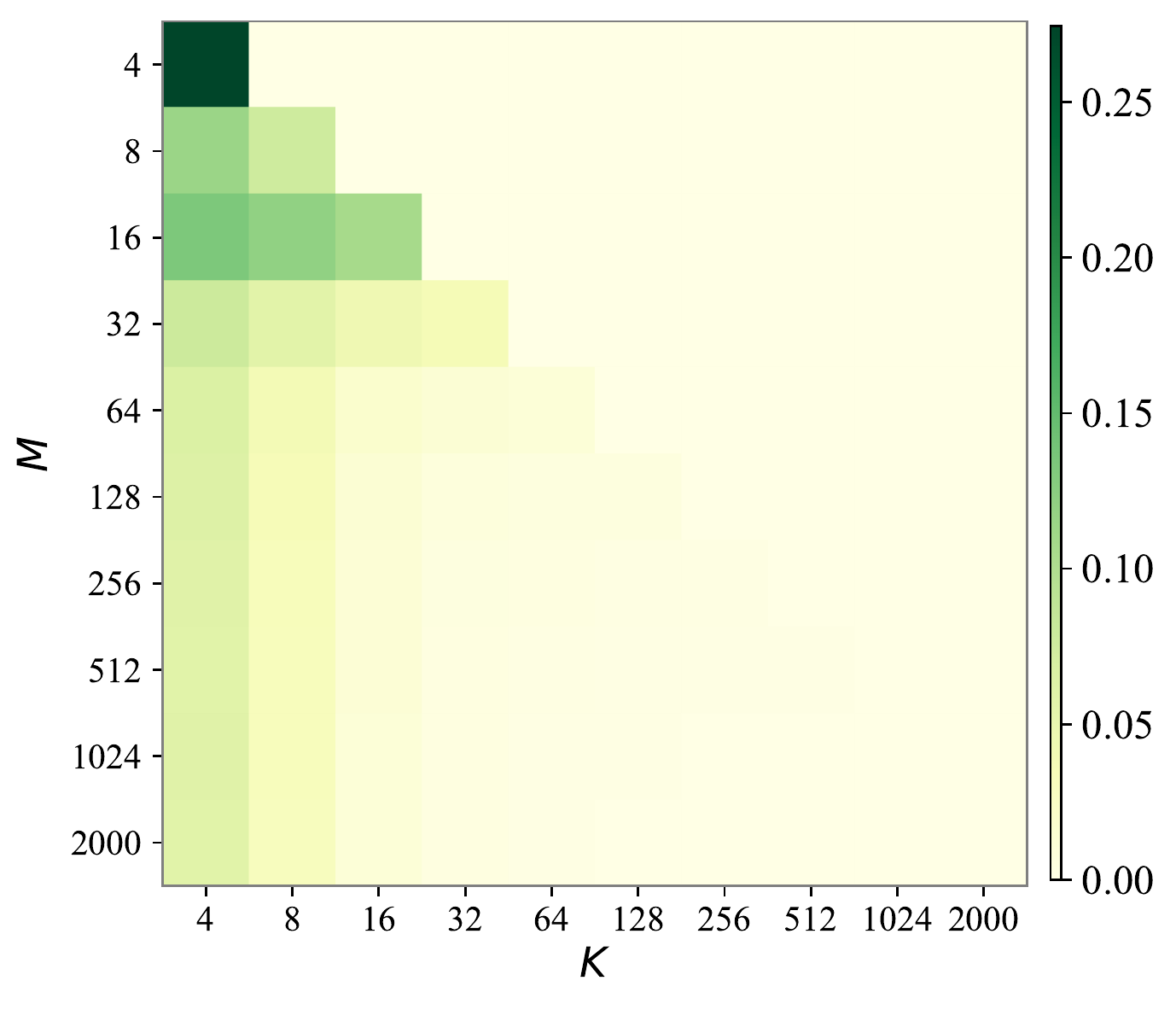}
      \caption{\scriptsize }
    \end{subfigure}
    \begin{subfigure}[b]{0.3\linewidth}
    \centering
    \includegraphics[width=0.99\linewidth]{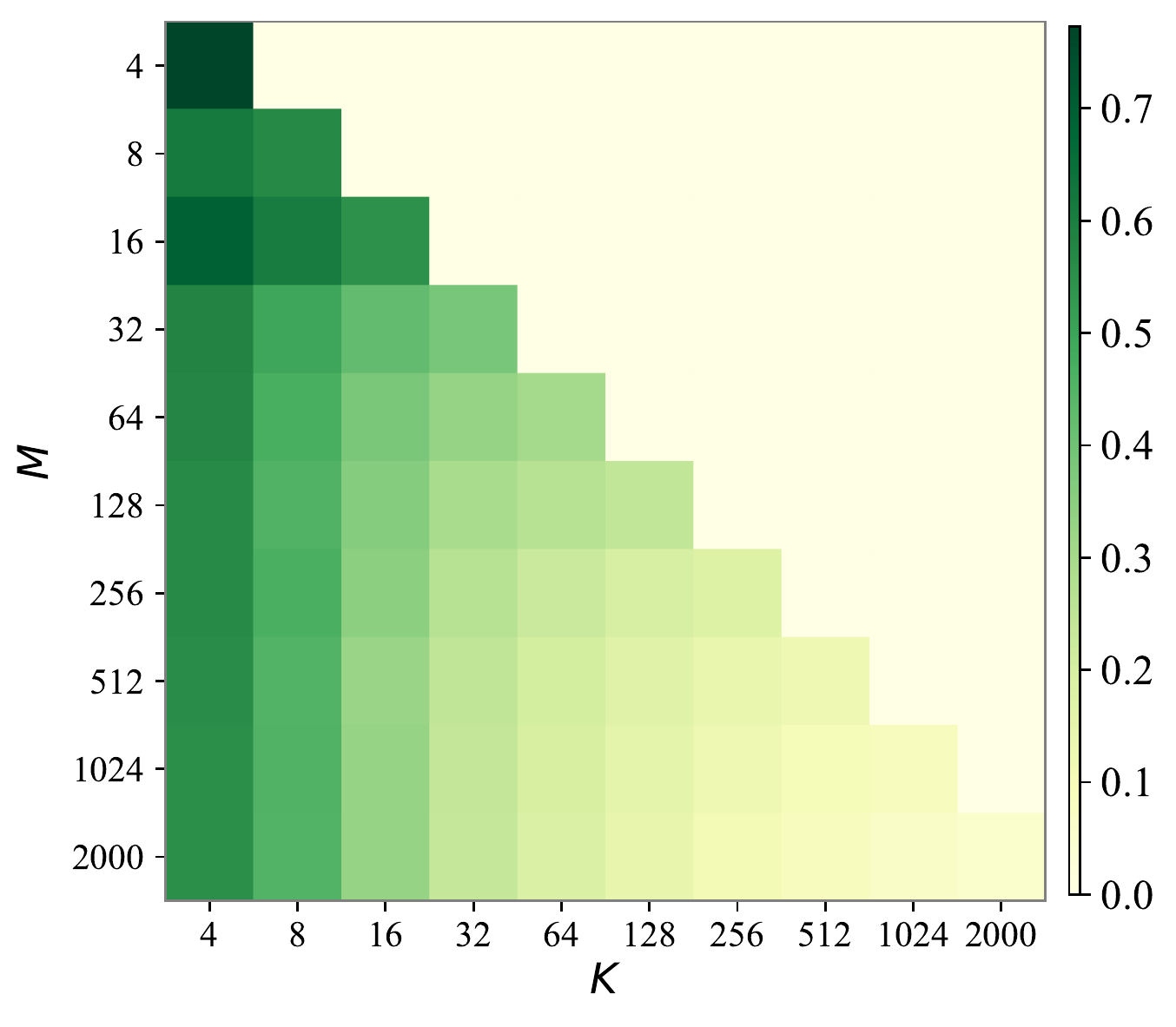}
        \caption{\scriptsize }
    \end{subfigure}
    \begin{subfigure}[b]{0.38\linewidth}
    \centering
    \includegraphics[width=\linewidth]{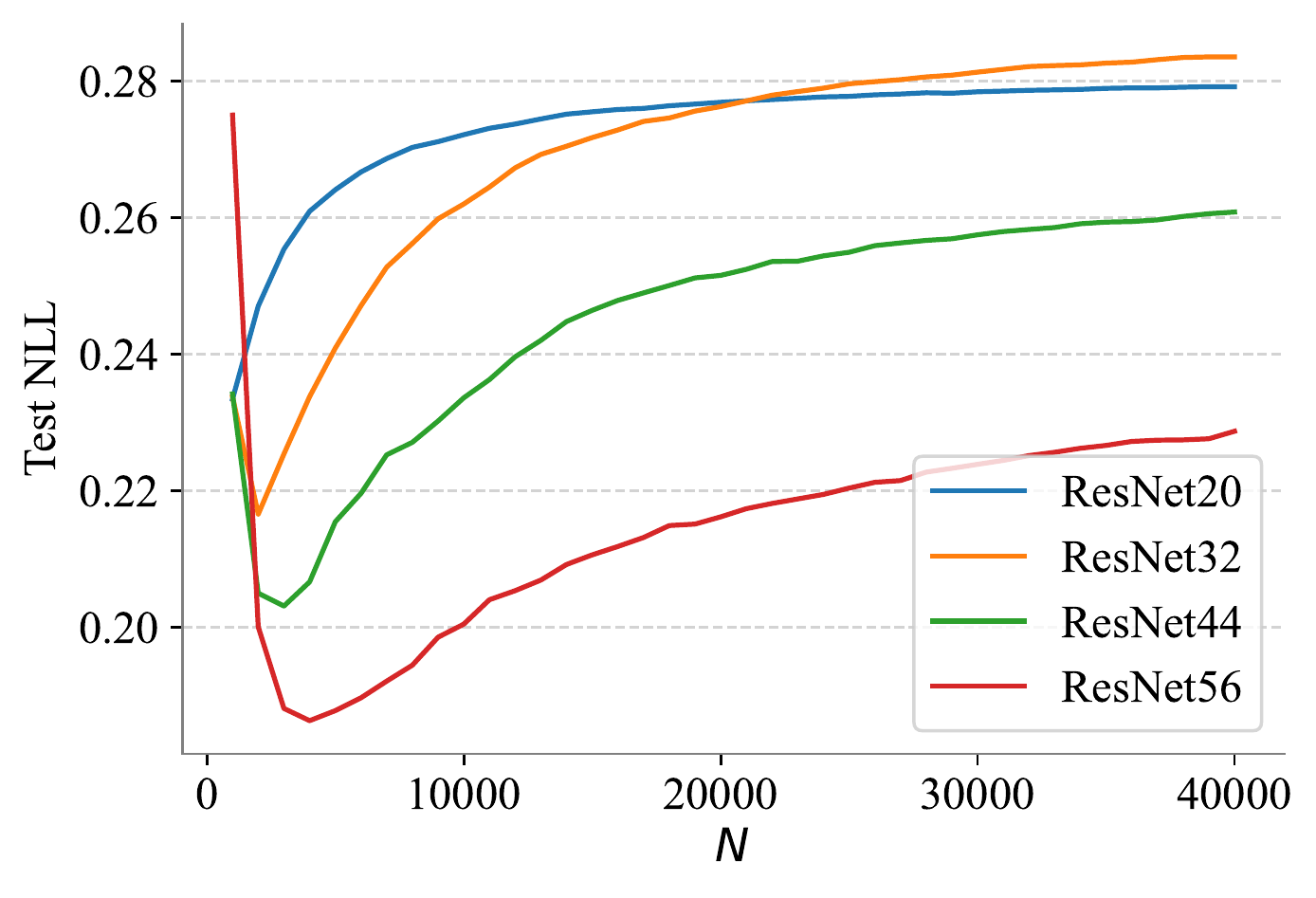}
        \caption{\scriptsize }
    \end{subfigure}
\caption{\footnotesize (a)-(b):
The approximation errors $\epsilon_{\text{Nystr\"{o}m}}$ and $\epsilon_{\text{ELLA}}$ vary w.r.t. $M$ and $K$ ($M \geq K$).
(c): The test NLL of ELLA on CIFAR-10 varies w.r.t. the number of training data $N$.}
\label{fig:approx-error}
\end{figure}

We perform an empirical study to inspect how $M$ and $K$ affect the quality of the Nystr\"{o}m approximation and the ELLA GP covariance.
We take $2000$ MNIST images as training set $\mathbf{X}$, and $256$ others as validation set $\mathbf{X}_\text{val}$.
The architecture contains $2$ convolutions and a linear head.
Batch normalization~\cite{ioffe2015batch} and ReLU activation are used.
The number of parameters $P$ is $29,034$.
Larger architectures or larger $\mathbf{X}$ will render the exact evaluation of $\kappa_\text{LLA}$ unapproachable.
We quantify the approximation error of the Nystr\"{o}m method by $\Scale[0.85]{\epsilon_{\text{Nystr\"{o}m}}\triangleq \Vert \mathbf{J}_{\hat{\vtheta}, \mathbf{X}}\mathbf{J}_{\hat{\vtheta}, \tilde{\mathbf{X}}}^\top(\mathbf{J}_{\hat{\vtheta}, \tilde{\mathbf{X}}}\mathbf{J}_{\hat{\vtheta}, \tilde{\mathbf{X}}}^\top)^{-1} \mathbf{J}_{\hat{\vtheta}, \tilde{\mathbf{X}}}\mathbf{J}_{\hat{\vtheta}, \mathbf{X}}^\top - \mathbf{J}_{\hat{\vtheta}, \mathbf{X}}  \mathbf{J}_{\hat{\vtheta}, \mathbf{X}}^\top \Vert/ \Vert\mathbf{J}_{\hat{\vtheta}, \mathbf{X}}  \mathbf{J}_{\hat{\vtheta}, \mathbf{X}}^\top \Vert}$, and that from $\kappa_\text{ELLA}$ to $\kappa_\text{LLA}$ by $\Scale[0.9]{\epsilon_{\text{ELLA}}\triangleq \frac{1}{|\mathbf{X}_\text{val}|}\sum_{\vx \in \mathbf{X}_\text{val}} \Vert \kappa_\text{ELLA}(\vx, \vx) - \kappa_\text{LLA}(\vx, \vx) \Vert / \Vert \kappa_\text{LLA}(\vx, \vx) \Vert}$.
We vary $M$ from $4$ to $2000$ and $K$ from $4$ to $M$, and report the approximation errors in \cref{fig:approx-error} (a)-(b).
We notice that 1) the larger $K$ the better; 2) when fixing $K$,  $\epsilon_{\text{Nystr\"{o}m}}$ and $\epsilon_{\text{ELLA}}$ decay as $M$ grows; 3) $\epsilon_{\text{Nystr\"{o}m}}$ decays more rapidly than $\epsilon_{\text{ELLA}}$, echoing Theorem~\ref{theorem:0}.
Given that ELLA needs to store $K$ vectors of size $P$, small $K$ is desired for efficiency.
$K\in[16, 32]$ seems to be a reasonable choice given \cref{fig:approx-error}.
Besides, \cref{app:1:1} includes a direct study on how the test NLL of ELLA varies w.r.t. $M$ and $K$ on CIFAR-10~\cite{krizhevsky2009learning} benchmark.
Given these results, we set $M=2000$ and $K=20$ in the following experiments unless otherwise stated.

ELLA (or LLA) finds a GP posterior, so predicts with a variant of the aforementioned posterior predictive, formulated as
$p(\vy |\vx_*, \mathcal{D}) = \mathbb{E}_{p(f|\mathcal{D})} p(\vy | f(\vx_*)) \approx \mathbb{E}_{q(f)} p(\vy | f(\vx_*)) = \mathbb{E}_{\vf \sim \mathcal{N}(g_{\hat{\vtheta}}(\vx_*), \kappa_\text{ELLA}(\vx_*, \vx_*))} p(\vy | \vf).$
In classification tasks, we use $512$ MC samples to approximate the last expectation and it is cheap. %

\subsection{The Overfitting Issue of LLA}
\label{sec:overfitting}
Reinspecting \cref{eq:ggn} and (\ref{eq:lla}), we see, with more training data involved, the covariance in LA, LLA, and ELLA shrinks and the uncertainty dissipates.
However, under ubiquitous model misspecification~\cite{masegosa2020learning}, \emph{should the uncertainty vanish that fast?}
We perform a study with ResNets~\cite{he2016deep} on CIFAR-10 to seek an answer.
Concretely, we randomly subsample $N$ data from the training set of CIFAR-10, and fit ELLA on them.
We depict the test negative log-likelihood (NLL), accuracy, and expected calibration error (ECE)~\cite{guo2017calibration} of the deployed ELLA in \cref{fig:approx-error} (c) and \cref{app:1:2}.
We also provide the corresponding results of LLA$^*$\footnote{We experiment on LLA$^*$ here due to its higher efficiency than LLA-KFAC and LLA-Diag. LLA$^*$ is the default option in the \texttt{Laplace} library~\cite{daxberger2021laplace}.} in \cref{app:1:3}.
The \emph{V-shape} NLL curves across settings reflect the \emph{overfitting} issue of ELLA and LLA$^*$ (or more generally, LLA). %
\cref{fig:approx-error-llastar} also shows that tuning the prior precision w.r.t. marginal likelihood can alleviate the overfitting of LLA$^*$ to some extent, which may be the reason why such an issue has not been reported by previous works.
But also of note that tuning the prior precision cannot fully eliminate overfitting.

To address the overfitting issue, we advocate \emph{performing early stop when fitting ELLA/LLA on big data}.
Specifically, when iterating over the training data to estimate $\mathbf{G}$ (see \cref{eq:lla}), we continuously record the NLL of the current ELLA posterior on some validation data, and stop when there is a trend of {overfitting}.
If we cannot access a validation set easily, we can apply strong data augmentation to some training data to form a substitute of the validation set.
Compared to tuning the prior, %
early stopping also helps reduce the time cost of processing big data (e.g., ImageNet~\cite{imagenet_cvpr09}).

\begin{table}[t]
  \centering
 \footnotesize
 \caption{\small Comparison on test accuracy (\%) $\uparrow$, NLL $\downarrow$, and ECE $\downarrow$ on CIFAR-10. We report the average results over 5 random runs. As the accuracy values of most methods are close, we do not highlight the best.}
 \vskip 0.1in
  \label{table:3}
  \setlength{\tabcolsep}{5.2pt}
  \begin{tabular}{
   >{\raggedright\arraybackslash}p{11.5ex}%
   >{\raggedleft\arraybackslash}p{3ex}%
   >{\raggedleft\arraybackslash}p{4.5ex}%
   >{\raggedleft\arraybackslash}p{4.5ex}%
   >{\raggedleft\arraybackslash}p{3ex}%
   >{\raggedleft\arraybackslash}p{4.5ex}%
   >{\raggedleft\arraybackslash}p{4.5ex}%
   >{\raggedleft\arraybackslash}p{3ex}%
   >{\raggedleft\arraybackslash}p{4.5ex}%
   >{\raggedleft\arraybackslash}p{4.5ex}%
   >{\raggedleft\arraybackslash}p{3ex}%
   >{\raggedleft\arraybackslash}p{4.5ex}%
   >{\raggedleft\arraybackslash}p{4.5ex}%
  }
  \toprule
\multicolumn{1}{c}{\multirow{2}{*}{Method}}& \multicolumn{3}{c}{ResNet-20} & \multicolumn{3}{c}{ResNet-32} & \multicolumn{3}{c}{ResNet-44} &\multicolumn{3}{c}{ResNet-56}\\
& \multicolumn{1}{c}{Acc.} & \multicolumn{1}{c}{NLL} & \multicolumn{1}{c}{ECE}& \multicolumn{1}{c}{Acc.}& \multicolumn{1}{c}{NLL} & \multicolumn{1}{c}{ECE}& \multicolumn{1}{c}{Acc.}& \multicolumn{1}{c}{NLL} & \multicolumn{1}{c}{ECE}& \multicolumn{1}{c}{Acc.}& \multicolumn{1}{c}{NLL} & \multicolumn{1}{c}{ECE}\\% &NLL & ECE\\
\midrule
\emph{ELLA} &  92.5& {0.233}& \textbf{0.009}& 93.5& \textbf{0.215}& \textbf{0.008}& 93.9& \textbf{0.204}& \textbf{0.007}& 94.4& \textbf{0.187}& \textbf{0.007}\\
\emph{MAP}& 92.6& 0.282& 0.039& 93.5& 0.292& 0.041& 94.0& 0.275& 0.039& 94.4& 0.252& 0.037\\
\emph{MFVI-BF}& {92.7}& \textbf{0.231}& 0.016& {93.5}& 0.222& 0.020& 93.9& {0.206}& 0.018& {94.4}& {0.188}& 0.016\\
\emph{LLA$^*$}& 92.6& 0.269& 0.034& 93.5& 0.259& 0.033& 94.0& 0.237& 0.028& 94.4& 0.213& 0.022\\
\emph{LLA$^*$-KFAC} & 92.6& 0.271& 0.035& 93.5& 0.260& 0.033& 94.0& 0.232& 0.028& 94.4& 0.202& 0.024\\
\emph{LLA-Diag} & 92.2& 0.728& 0.404& 92.7& 0.755& 0.430& 92.8& 0.778& 0.445& 92.9& 0.843& 0.480\\
\emph{LLA-KFAC}& 92.0&  0.852& 0.467& 91.8& 1.027& 0.547& 91.4& 1.091& 0.566& 89.8& 1.174& 0.579\\
  \bottomrule
   \end{tabular}
\end{table}

\begin{figure*}[t]
\centering
\begin{minipage}{0.99\linewidth}
\centering
    \begin{subfigure}[b]{0.49\linewidth}
    \centering
    \includegraphics[width=0.95\linewidth]{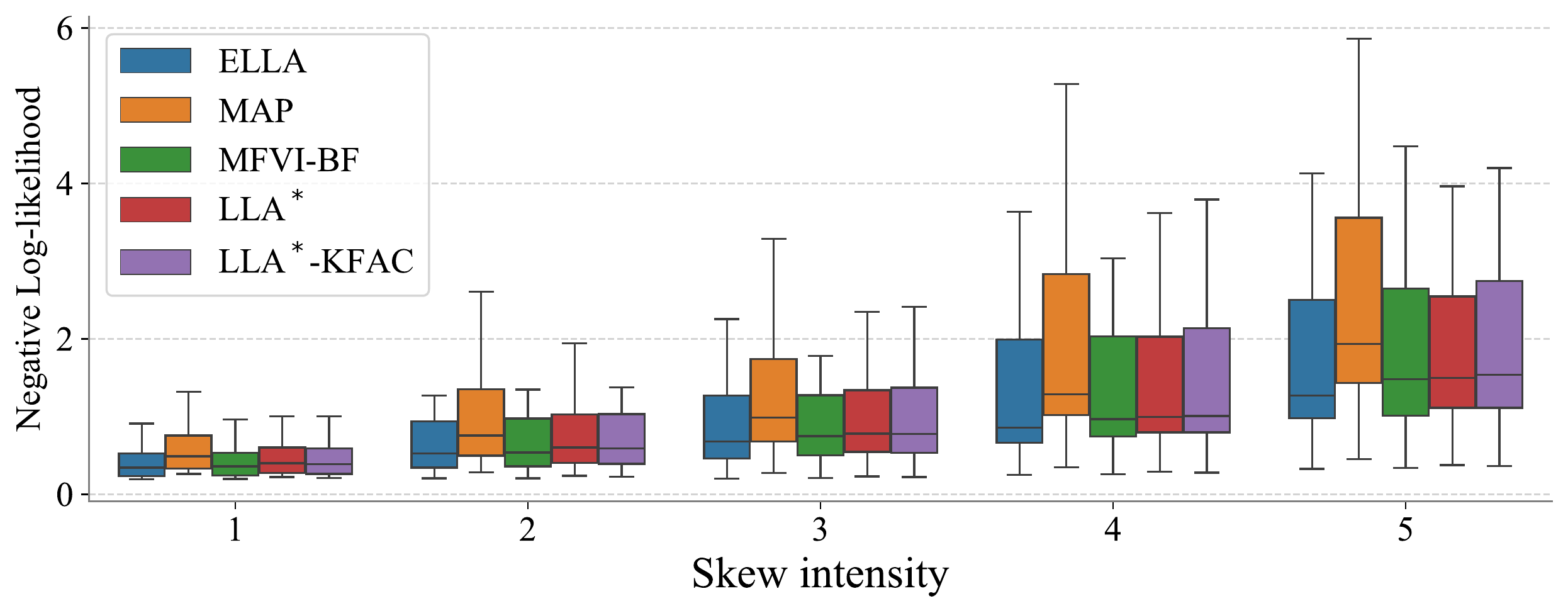}
    \end{subfigure}
    \begin{subfigure}[b]{0.49\linewidth}
    \centering
    \includegraphics[width=0.95\linewidth]{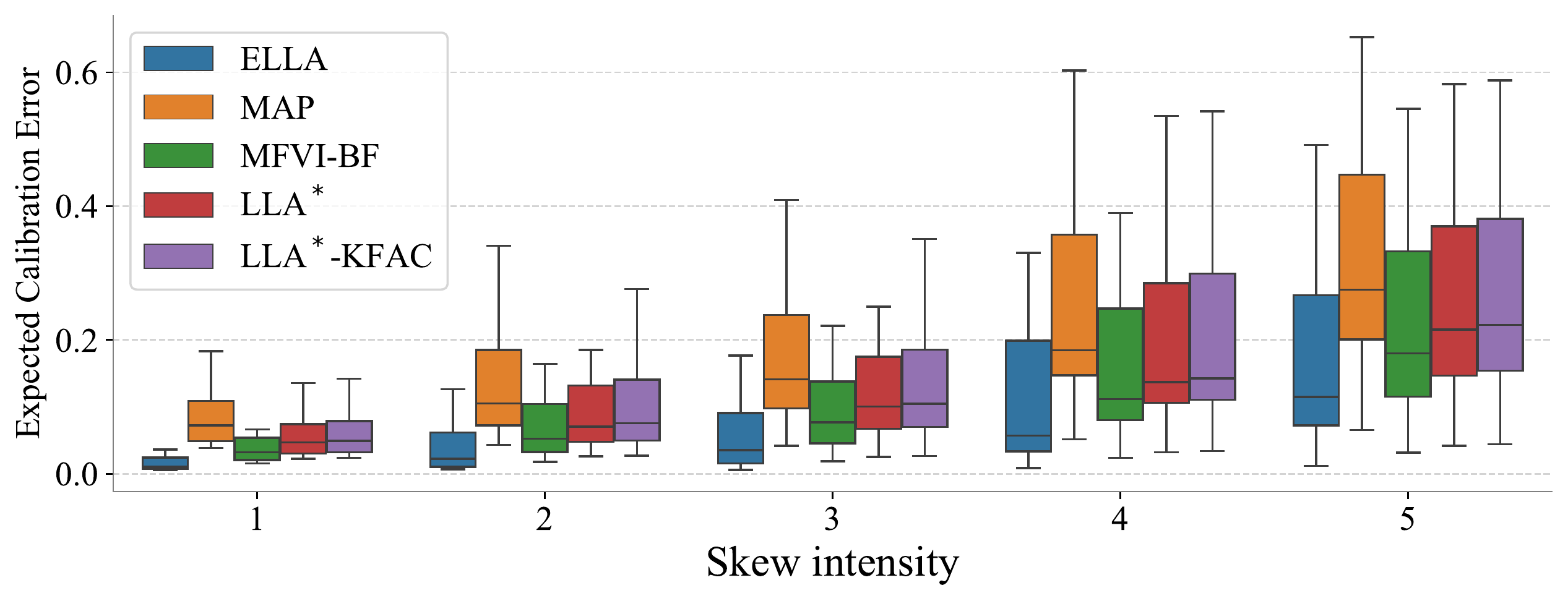}
    \end{subfigure}
\end{minipage}
\caption{\small NLL (Left) and ECE (Right) on CIFAR-10 corruptions for models trained with ResNet-56 architecture. Each box corresponds to a summary of the results across 19 types of skew.}
\label{fig:cifar-corruptions}
\end{figure*}

\begin{figure*}[t]
\centering
\begin{minipage}{0.99\linewidth}
\centering
    \begin{subfigure}[b]{0.32\linewidth}
    \centering
    \includegraphics[width=0.9\linewidth]{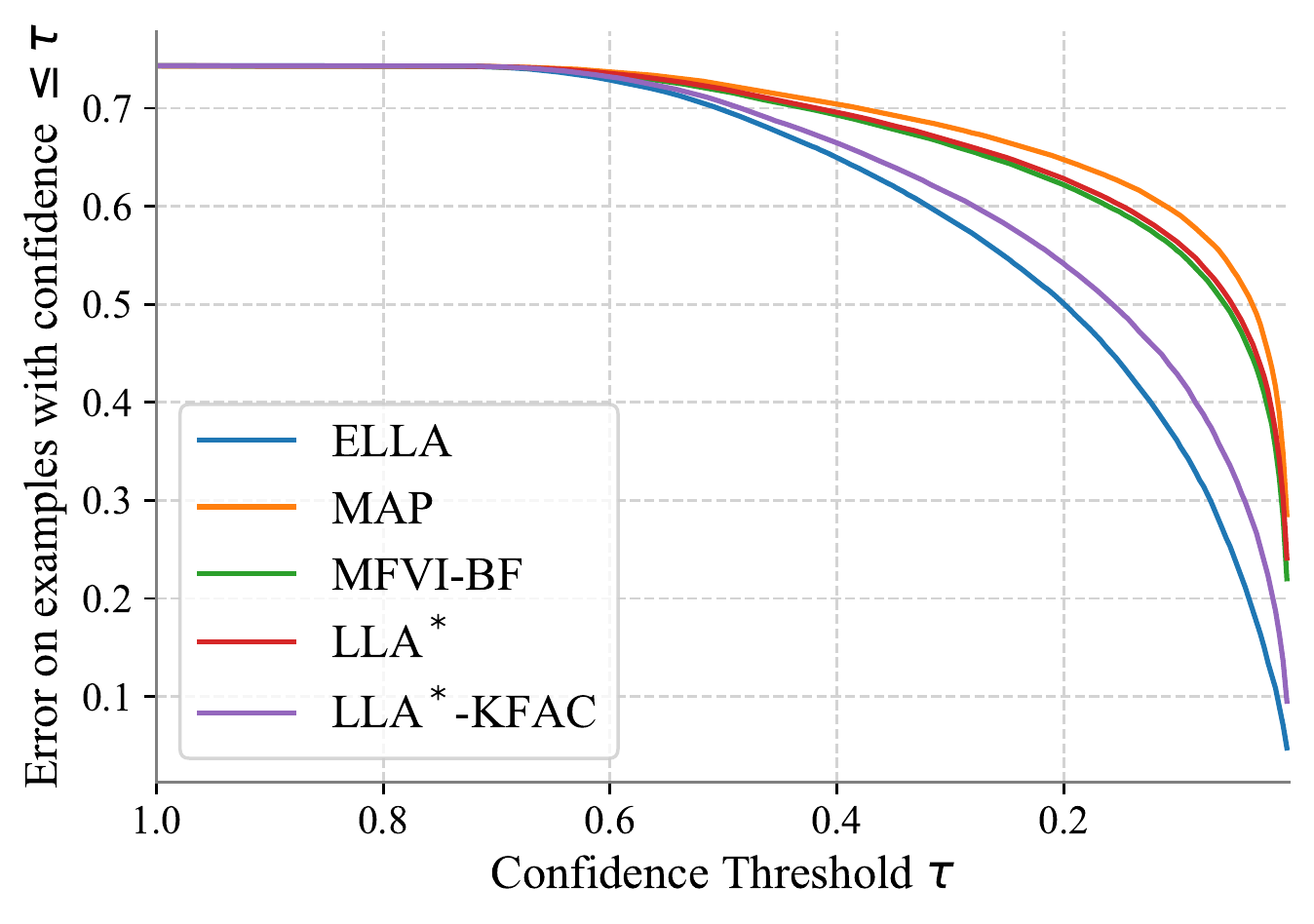}
    \caption{}
    \end{subfigure}
    \begin{subfigure}[b]{0.32\linewidth}
    \centering
    \includegraphics[width=0.9\linewidth]{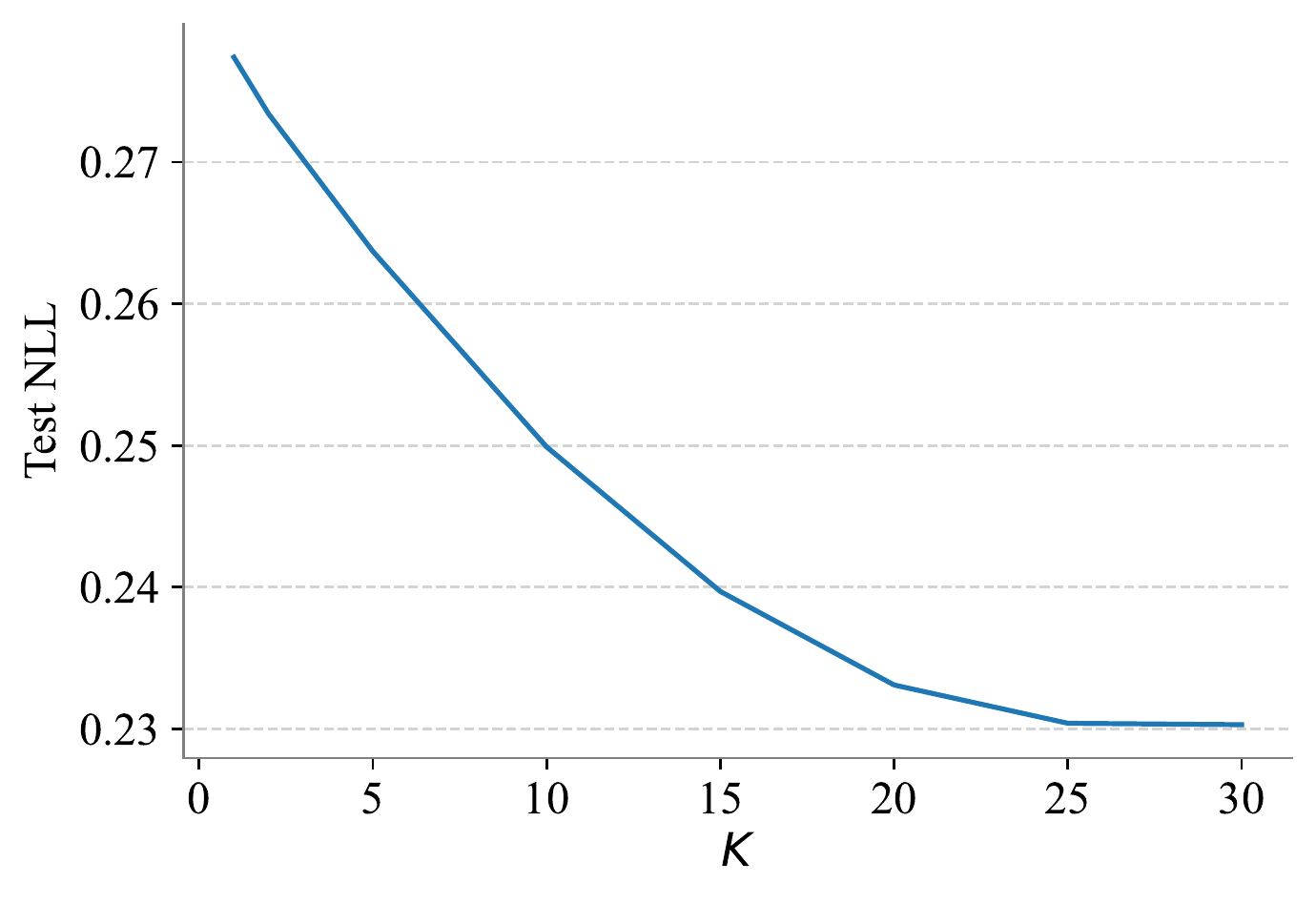}
    \caption{}
    \end{subfigure}
    \begin{subfigure}[b]{0.32\linewidth}
    \centering
    \includegraphics[width=0.9\linewidth]{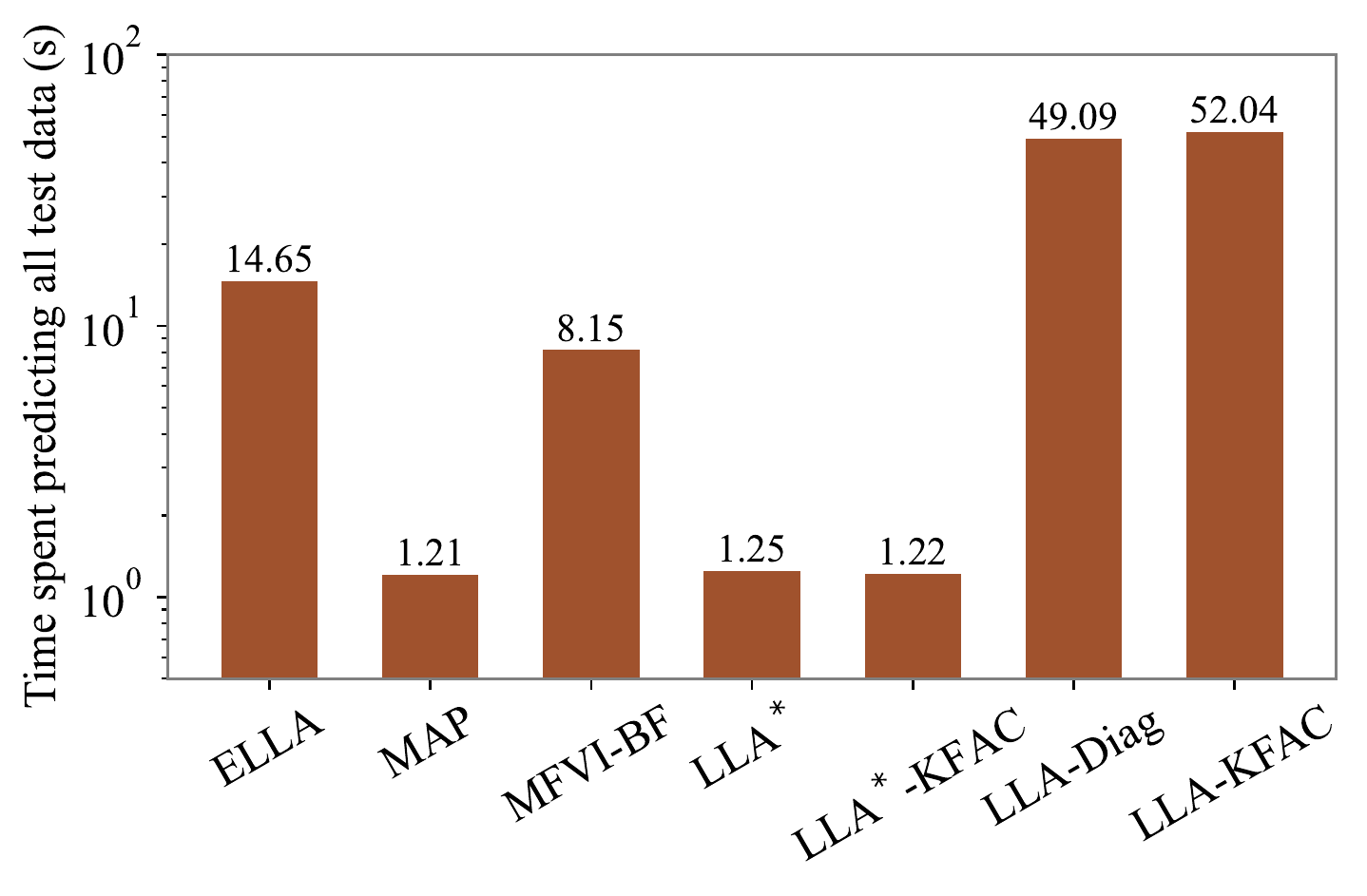}
    \caption{}
    \end{subfigure}
\end{minipage}
\caption{\small (a) Error versus confidence plots for methods trained on CIFAR-10 and tested on CIFAR-10+SVHN. (b) Test NLL of ELLA varies w.r.t. $K$ on CIFAR-10. (c) Comparison on the wall-clock time used for predicting all CIFAR-10 test data (measured on an NVIDIA A40 GPU).
The results are obtained with ResNet-20 architecture.}
\label{fig:acc-vs-conf-cifar}
\end{figure*}

\subsection{Illustrative Regression}
\label{sec:exp:illu}
We build a regression problem with $N=16$ samples from $y=\sin 2x + \epsilon, \epsilon \sim \mathcal{N}(0, 0.2)$ as shown in \cref{fig:illu2}.
The model is an MLP with 3 hidden layers and \texttt{tanh} activations, and we pretrain it by MAP for $1000$ iterations.
For ELLA, we set $M=16$ and $K=5$ for efficiency.
Unless stated otherwise, we use the interfaces of \texttt{Laplace}~\cite{daxberger2021laplace} to implement LLA, LLA-KFAC, LLA-Diag, and LLA$^*$.
The hyperparameters of the competitors are \emph{equivalent} to those of ELLA except for some dedicated ones like $M$ and $K$.
It is clear that ELLA delivers a closer approximation to LLA than LLA-Diag and LLA$^*$.
We further quantify the quality of the predictive distributions produced by these approximate LLA methods using certain metrics.
Considering that in this case, the predictive distribution for one test datum is a Gaussian distribution, we use the KL divergence between the Gaussians yielded by the approximate LLA method and vanilla LLA as a proxy of the approximation error (averaged over a set of test points).
The results are reported in \cref{app:1:5}.
As shown, LLA-KFAC comes pretty close to LLA.
Yet, LLA seems to \emph{underestimate} in-between uncertainty in this setting, %
so ELLA seems to be a more reliable (instead of more accurate) approximation than LLA-KFAC.
We also highlight the higher scalability of ELLA than LLA-KFAC (see \cref{fig:acc-vs-conf-cifar}(c)), which reflects that ELLA can strike a good trade-off between efficacy and efficiency.

\subsection{CIFAR-10 Classification}
Then, we evaluate ELLA on CIFAR-10 benchmark using ResNet architectures~\cite{he2016deep}.
We obtain pretrained MAP models from the open source community.
Apart from MAP, LLA$^*$, LLA-Diag, LLA-KFAC, we further introduce last-layer LLA with KFAC approximation (LLA$^*$-KFAC) and mean-field VI via Bayesian finetuning~\cite{deng2020bayesadapter} (MFVI-BF)\footnote{Flipout~\cite{wen2018flipout} trick is deployed for variance-reduced gradient estimation.} as baselines.
LLA cannot be directly applied as the GGN matrices are huge.
These methods all locate in the family of Gaussian approximate posteriors and are all \emph{post-hoc} applied to the pretrained models, so the comparisons will be fair.  %
Regarding the setups, we use $M=2000$ and $K=20$ for ELLA;\footnote{Storing $2000$ vectors of size $P$ is costly, so we trade time for memory and compute them whenever needed.} we use $20$ MC samples to estimate the posterior predictive of MFVI-BF (as it incurs $20$ NN forward passes), and use $512$ ones for the other methods as stated.
We have enabled \emph{the tuning of the prior precision} for all LLA baselines, but not for ELLA.

We present the comparison on test accuracy, NLL, and ECE in \cref{table:3}.
As shown, ELLA exhibits superior NLL and ECE across settings.
MFVI-BF also gives good NLL. %
LLA$^*$ and LLA$^*$-KFAC can improve the uncertainty and calibration of MAP, yet underperforming ELLA.
LLA-Diag and LLA-KFAC fail for unclear reasons (also reported by \cite{deng2022neuralef}), we thus exclude them from the following studies.

We then examine the models on the widely used out-of-distribution (OOD) generalization/robustness benchmark CIFAR-10 corruptions~\cite{hendrycks2019benchmarking} and report the results in \cref{fig:cifar-corruptions} and \cref{app:1:4}.
It is prominent that ELLA surpasses the baselines in aspects of NLL and ECE at various levels of skew, implying its ability to make conservative predictions for OOD inputs.

We further evaluate the models on the combination of the CIFAR-10 test data and the OOD SVHN test data. %
The predictions on SVHN are all regarded as wrong due to label shift.
We plot the average error rate of the samples with $\leq \tau$ ($\tau \in [0, 1]$) confidence in \cref{fig:acc-vs-conf-cifar} (a).
As shown, ELLA makes less mistakes than the baselines under various confidence thresholds.
\cref{fig:acc-vs-conf-cifar} (b) displays how $K$ impacts the test NLL.
We see that $K\in [20, 30]$ can already lead to good performance, reflecting the efficiency of ELLA.
Another benefit of ELLA is that with it, we can actively control the performance vs. cost trade-off.
\cref{fig:acc-vs-conf-cifar} (c) shows the comparison on the time used for predicting all CIFAR-10 test data.
We note that ELLA is slightly slower than MFVI and substantially faster than \emph{non-last-layer} LLA methods.

\begin{table}[t]
  \centering
 \footnotesize
 \caption{\small Comparison on test accuracy (\%) $\uparrow$, NLL $\downarrow$, and ECE $\downarrow$ on ImageNet. We report the average results over 3 random runs.}
 \vskip 0.1in
  \label{table:4}
  \setlength{\tabcolsep}{7pt}
  \begin{tabular}{
   >{\raggedright\arraybackslash}p{14.5ex}%
   >{\raggedleft\arraybackslash}p{5ex}%
   >{\raggedleft\arraybackslash}p{5ex}%
   >{\raggedleft\arraybackslash}p{5ex}%
   >{\raggedleft\arraybackslash}p{5ex}%
   >{\raggedleft\arraybackslash}p{5ex}%
   >{\raggedleft\arraybackslash}p{5ex}%
   >{\raggedleft\arraybackslash}p{5ex}%
   >{\raggedleft\arraybackslash}p{5ex}%
   >{\raggedleft\arraybackslash}p{5ex}%
  }
  \toprule
\multicolumn{1}{c}{\multirow{2}{*}{Method}}& \multicolumn{3}{c}{ResNet-18} & \multicolumn{3}{c}{ResNet-34} & \multicolumn{3}{c}{ResNet-50} \\
& \multicolumn{1}{c}{Acc.} & \multicolumn{1}{c}{NLL} & \multicolumn{1}{c}{ECE}& \multicolumn{1}{c}{Acc.}& \multicolumn{1}{c}{NLL} & \multicolumn{1}{c}{ECE}& \multicolumn{1}{c}{Acc.}& \multicolumn{1}{c}{NLL} & \multicolumn{1}{c}{ECE}\\% &NLL & ECE\\
\midrule
\emph{ELLA} &  69.8& 1.243& \textbf{0.015}& 73.3& 1.072& \textbf{0.018}& \textbf{76.2}& 0.948& \textbf{0.018}\\
\emph{MAP}& 69.8& 1.247& 0.026& 73.3& 1.081& 0.035& \textbf{76.2}& 0.962& 0.037\\
\emph{MFVI-BF}& \textbf{70.3}& \textbf{1.218}& 0.042& \textbf{73.7}& \textbf{1.043}& 0.033& 76.1& \textbf{0.945}& 0.030\\
  \bottomrule
   \end{tabular}
\end{table}

\begin{figure*}[t]
\centering
\begin{minipage}{0.99\linewidth}
\centering
    \begin{subfigure}[b]{0.49\linewidth}
    \centering
    \includegraphics[width=0.95\linewidth]{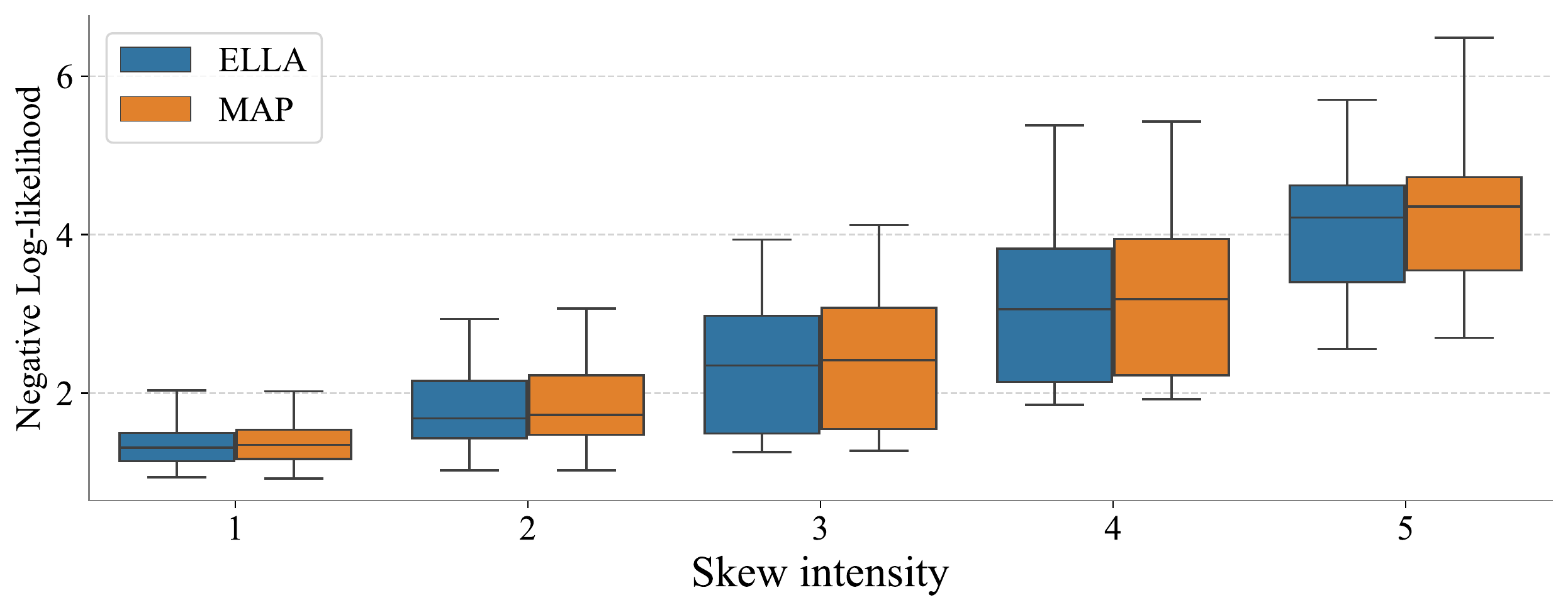}
    \end{subfigure}
    \begin{subfigure}[b]{0.49\linewidth}
    \centering
    \includegraphics[width=0.95\linewidth]{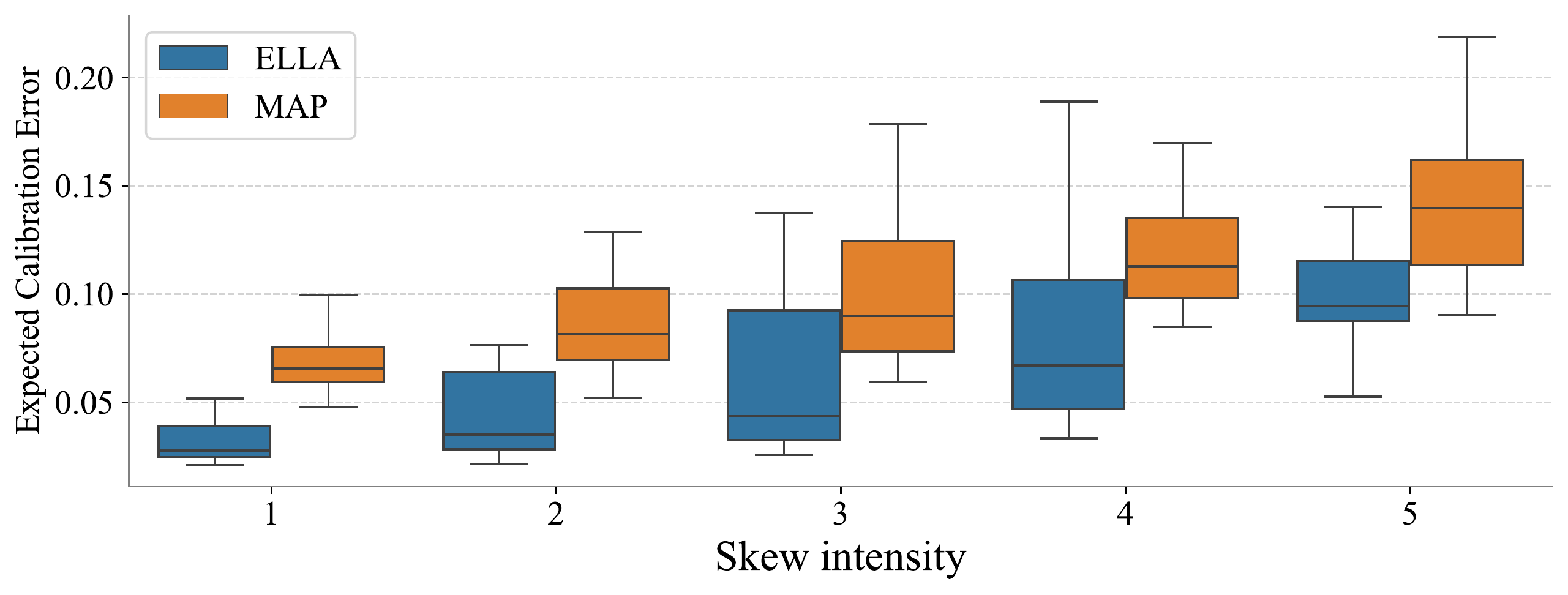}
    \end{subfigure}
\end{minipage}
\caption{\small NLL (Left) and ECE (Right) on ImageNet corruptions for models trained with ViT-B architecture. Each box corresponds to a summary of the results across 19 types of skew.}
\label{fig:image-corruptions}
\end{figure*}

\subsection{ImageNet Classification}
\label{sec:exp:in}
We apply ELLA to ImageNet classification~\cite{imagenet_cvpr09} to demonstrate its scalability.
The experiment settings are identical with those for CIFAR-10.
We observe that all LLA methods implemented with \texttt{Laplace} would cause out-of-memory (OOM) errors or suffer from very long fitting procedures, thus take only MAP and MFVI-BF as baselines.
\cref{table:4} presents the comparison on test accuracy, NLL, and ECE with ResNet architectures.
We see that ELLA maintains its superiority in ECE while MFVI-BF can induce higher accuracy and lower NLL.
This may be attributed to that the pretrained MAP model lies at a sharp maxima of the true posterior so it is necessary to properly adjust the mean of the Gaussian approximate posterior.

\begin{wraptable}{r}{45mm}
\vspace{-0.05cm}
\centering
\footnotesize
\caption{\footnotesize Comparison on test accuracy (\%) $\uparrow$, NLL $\downarrow$, and ECE $\downarrow$ on ImageNet with ViT-B architecture.}
 \vspace{-1.ex}
\label{table:5}
\setlength{\tabcolsep}{5.5pt}
    \begin{tabular}{
   >{\raggedright\arraybackslash}p{6ex}%
   >{\raggedleft\arraybackslash}p{4.5ex}%
   >{\raggedleft\arraybackslash}p{4.5ex}%
   >{\raggedleft\arraybackslash}p{4.5ex}%
  }
  \toprule
Method & Acc. & NLL & ECE\\% &NLL & ECE\\
\midrule
\emph{ELLA} &  81.6& 0.695& {0.022}\\
\emph{MAP}& 81.5& 0.700& 0.039\\
  \bottomrule
   \end{tabular}
\vspace{-0.2cm}
\end{wraptable}
We lastly apply ELLA to ViT-B~\cite{dosovitskiy2020image}.
We compare ELLA to MAP in \cref{table:5} as all other baselines incur OOM errors or crushingly long running time.
As shown, ELLA beats MAP in multiple aspects.
\cref{fig:image-corruptions} shows the results of ELLA and MAP on ImageNet corruptions~\cite{hendrycks2019benchmarking}. They are consistent with those for CIFAR-10 corruptions.
We reveal by this experiment that ELLA can be a more applicable and scalable method than most of existing BNNs.

\section{Related Work}
LA~\cite{mackay1992bayesian,ritter2018scalable,foong2019between,kristiadi2020being,immer2021improving,daxberger2021laplace,daxberger2021bayesian} locally approximates the Bayesian posterior with a Gaussian distribution, analogous to VI with Gaussian variationals~\cite{graves2011practical,blundell2015weight,louizos2016structured,sun2017learning,osawa2019practical} and SWAG~\cite{maddox2019simple}.
LA can be applied to pretrained models effortlessly while the acquired posteriors are potentially restrictive (see \cref{sec:exp:in}).
VI enjoys higher flexibility yet relies on costly training; BayesAdapter~\cite{deng2020bayesadapter} seems to be a remedy to the issue but its accessibility is still lower than LA.
SWAG stores a series of SGD iterates to heuristically construct an approximate posterior and is empirically weaker than LA/LLA~\cite{daxberger2021laplace}.

Though more expressive approaches like deep ensembles~\cite{lakshminarayanan2017simple} and MCMC~\cite{welling2011bayesian,chen2014stochastic,zhang2019cyclical} can explicitly explore diverse posterior modes, they face limitations in efficiency and scalability.
What's more, it has been shown that LA/LLA can perform on par with or better than deep ensembles and cyclical MCMC on multiple benchmarks~\cite{daxberger2021laplace}.
This may be attributed to the un-identified, complicated relationships between the parameters space and the function space of DNNs~\cite{wilson2020bayesian}.
Also of note that LA can embrace deep ensembles to capture multiple posterior modes~\cite{eschenhagen2021mixtures}.

\cite{deng2022neuralef} introduces a general kernel approximation technique using neural networks.
By contrast, we focus on leveraging kernel approximation to accelerate Laplace approximation.
Thus, the focus of the two works is distinct.
Indeed, this work shares a similar idea with the Sec 4.3 in \cite{deng2022neuralef} that LLA can be accelerated by kernel approximation.
But, except for such an idea, this work differentiates from \cite{deng2022neuralef} in aspects like motivations, techniques, implementations, theoretical backgrounds, and applications.
Our implementation, theoretical analysis, and some empirical findings are all novel.

\section{Conclustion}
This paper proposes ELLA, a simple, effective, and reliable approach for Bayesian deep learning.
ELLA addresses the unreliability issues of existing approximations to LLA and is implemented based on Nystr\"{o}m method.
We offer theoretical guarantees for ELLA and perform extensive studies to testify its efficacy and scalability.
ELLA currently accounts for only the predictive, and extending it to estimate model evidence for model selection~\cite{mackay1992bayesian} deserves future investigation.
Using model evidence to select the number of remaining eigenpairs $K$ for ELLA is also viable.

\begin{ack}
This work was supported by NSF of China Projects (Nos. 62061136001, 62076145, 62106121, U19B2034, U1811461, U19A2081, 6197222); Beijing NSF Project (No.
JQ19016); a
grant from Tsinghua Institute for Guo Qiang; and the High Performance Computing Center, Tsinghua University. J.Z was also supported by the XPlorer Prize
\end{ack}

{
\small
\bibliographystyle{plain}
\bibliography{neurips_2022}
}

\section*{Checklist}

\begin{enumerate}

\item For all authors...
\begin{enumerate}
  \item Do the main claims made in the abstract and introduction accurately reflect the paper's contributions and scope?
    \answerYes{}
  \item Did you describe the limitations of your work?
    \answerYes{See Section Related Work.}
  \item Did you discuss any potential negative societal impacts of your work?
    \answerNA{}
  \item Have you read the ethics review guidelines and ensured that your paper conforms to them?
    \answerYes{}
\end{enumerate}

\item If you are including theoretical results...
\begin{enumerate}
  \item Did you state the full set of assumptions of all theoretical results?
    \answerNA{}
        \item Did you include complete proofs of all theoretical results?
    \answerYes{See Appendix.}
\end{enumerate}

\item If you ran experiments...
\begin{enumerate}
  \item Did you include the code, data, and instructions needed to reproduce the main experimental results (either in the supplemental material or as a URL)?
    \answerYes{See supplemental material.}
  \item Did you specify all the training details (e.g., data splits, hyperparameters, how they were chosen)?
    \answerYes{See Section Experiments.}
        \item Did you report error bars (e.g., with respect to the random seed after running experiments multiple times)?
    \answerNo{}
        \item Did you include the total amount of compute and the type of resources used (e.g., type of GPUs, internal cluster, or cloud provider)?
    \answerNo{}
\end{enumerate}

\item If you are using existing assets (e.g., code, data, models) or curating/releasing new assets...
\begin{enumerate}
  \item If your work uses existing assets, did you cite the creators?
    \answerYes{}
  \item Did you mention the license of the assets?
    \answerNA{}
  \item Did you include any new assets either in the supplemental material or as a URL?
    \answerNo{}
  \item Did you discuss whether and how consent was obtained from people whose data you're using/curating?
    \answerNA{}
  \item Did you discuss whether the data you are using/curating contains personally identifiable information or offensive content?
    \answerNA{}
\end{enumerate}

\item If you used crowdsourcing or conducted research with human subjects...
\begin{enumerate}
  \item Did you include the full text of instructions given to participants and screenshots, if applicable?
    \answerNA{}
  \item Did you describe any potential participant risks, with links to Institutional Review Board (IRB) approvals, if applicable?
    \answerNA{}
  \item Did you include the estimated hourly wage paid to participants and the total amount spent on participant compensation?
    \answerNA{}
\end{enumerate}

\end{enumerate}

\newpage
\appendix

\section{Proof}
\subsection{Derivation of \cref{eq:k-lla}}
Given
\begin{equation*}
\small
\mathbf{\Sigma} =\sigma_0^2 \Big(\mathbf{I}_P - {\mathbf{J}_{\hat{\vtheta}, \mathbf{X}}}^\top [\mathbf{\Lambda}_{\mathbf{X}, \mathbf{Y}}^{-1}/\sigma_0^2+\mathbf{J}_{\hat{\vtheta}, \mathbf{X}} {\mathbf{J}_{\hat{\vtheta}, \mathbf{X}}}^\top]^{-1} {\mathbf{J}_{\hat{\vtheta}, \mathbf{X}}}\Big),
\end{equation*}
we have
\begin{align*}
\scriptsize
\kappa_\text{LLA}(\vx, \vx')=&J_{\hat{\vtheta}}(\vx) \mathbf{\Sigma}{J_{\hat{\vtheta}}(\vx')}^\top\\
=&\sigma_0^2 J_{\hat{\vtheta}}(\vx) \left(\mathbf{I}_P - {\mathbf{J}_{\hat{\vtheta}, \mathbf{X}}}^\top \left[\mathbf{\Lambda}_{\mathbf{X}, \mathbf{Y}}^{-1}/\sigma_0^2+\mathbf{J}_{\hat{\vtheta}, \mathbf{X}} {\mathbf{J}_{\hat{\vtheta}, \mathbf{X}}}^\top\right]^{-1} {\mathbf{J}_{\hat{\vtheta}, \mathbf{X}}}\right) {J_{\hat{\vtheta}}(\vx')}^\top \\
=& \sigma_0^2 \left(J_{\hat{\vtheta}}(\vx){J_{\hat{\vtheta}}(\vx')}^\top - J_{\hat{\vtheta}}(\vx){\mathbf{J}_{\hat{\vtheta}, \mathbf{X}}}^\top \left[\mathbf{\Lambda}_{\mathbf{X}, \mathbf{Y}}^{-1}/\sigma_0^2+\mathbf{J}_{\hat{\vtheta}, \mathbf{X}} {\mathbf{J}_{\hat{\vtheta}, \mathbf{X}}}^\top\right]^{-1} {\mathbf{J}_{\hat{\vtheta}, \mathbf{X}}}{J_{\hat{\vtheta}}(\vx')}^\top \right) \\
=& \sigma_0^2 \Big(\kappa_\text{NTK}(\vx, \vx') - \kappa_\text{NTK}(\vx, {\mathbf{X}}) [\mathbf{\Lambda}_{\mathbf{X}, \mathbf{Y}}^{-1}/\sigma_0^2+\kappa_\text{NTK}({\mathbf{X}}, {\mathbf{X}})]^{-1} \kappa_\text{NTK}({\mathbf{X}}, \vx') \Big).
\end{align*}

\subsection{Derivation of \cref{eq:lla}}
\label{app:eq:lla}
\begin{align*}
\scriptsize
\kappa_\text{ELLA}(\vx, \vx') = & \sigma_0^2 \left(\varphi(\vx)\varphi(\vx')^\top - \varphi(\vx)\vphi_\mathbf{X}^\top \left[\mathbf{\Lambda}_{\mathbf{X}, \mathbf{Y}}^{-1}/\sigma_0^2+\vphi_\mathbf{X}\vphi_\mathbf{X}^\top\right]^{-1} \vphi_\mathbf{X}\varphi(\vx')^\top \right) \\
= &   \varphi(\vx) \sigma_0^2 \left(\mathbf{I}_K - \vphi_\mathbf{X}^\top \left[\mathbf{\Lambda}_{\mathbf{X}, \mathbf{Y}}^{-1}/\sigma_0^2+\vphi_\mathbf{X}\vphi_\mathbf{X}^\top\right]^{-1} \vphi_\mathbf{X} \right) \varphi(\vx')^\top\\
= & \varphi(\vx)  \left[\vphi_\mathbf{X}^\top \mathbf{\Lambda}_{\mathbf{X}, \mathbf{Y}} \vphi_\mathbf{X} + {\mathbf{I}_K}/{\sigma_0^2}\right]^{-1} \varphi(\vx')^\top \tag*{(Woodbury matrix identity)} \\
= & \varphi(\vx)  \Big[\sum_i \varphi(\vx_i)^\top \Lambda(\vx_i, \vy_i) \varphi(\vx_i) + {\mathbf{I}_K}/{\sigma_0^2}\Big]^{-1} \varphi(\vx')^\top.
\end{align*}

\subsection{Derivation of \cref{eq:lla2}}
\label{app:eq:lla2}
When $M=K$, the eigenvalues and eigenvectors of $\mathbf{K}=\mathbf{J}_{\hat{\vtheta}, \tilde{\mathbf{X}}} \mathbf{J}_{\hat{\vtheta}, \tilde{\mathbf{X}}}^\top$ can be organized as square matrices $\diag{(\vlambda)}=\diag{(\lambda_1,...,\lambda_K)}$ and $\vU=[\vu_1, ..., \vu_K]$ respectively.
And $\varphi(\vx) = [J_{\hat{\vtheta}}(\vx) \vv_1, ..., J_{\hat{\vtheta}}(\vx) \vv_K] = J_{\hat{\vtheta}}(\vx) \mathbf{J}_{\hat{\vtheta}, \tilde{\mathbf{X}}}^\top \vU \diag{(\vlambda)}^{-\frac{1}{2}}$.
Of note that $\vU$ is a orthogonal matrix where $\vU\vU^\top = \vU^\top\vU = \mathbf{I}_K \Rightarrow \vU^{-1} = \vU^\top$, and by definition, $\mathbf{J}_{\hat{\vtheta}, \tilde{\mathbf{X}}} \mathbf{J}_{\hat{\vtheta}, \tilde{\mathbf{X}}}^\top = \vU \diag{(\vlambda)} \vU^\top$.
Then
\begin{align*}
\scriptsize
\kappa_\text{ELLA}(\vx, \vx') = & \varphi(\vx)  \Big[\sum_i \varphi(\vx_i)^\top \Lambda(\vx_i, \vy_i) \varphi(\vx_i) + {\mathbf{I}_K}/{\sigma_0^2}\Big]^{-1} \varphi(\vx')^\top \\
= & \varphi(\vx)  \left[\vphi_\mathbf{X}^\top \mathbf{\Lambda}_{\mathbf{X}, \mathbf{Y}} \vphi_\mathbf{X} + {\mathbf{I}_K}/{\sigma_0^2}\right]^{-1} \varphi(\vx')^\top \\
=& J_{\hat{\vtheta}}(\vx) \mathbf{J}_{\hat{\vtheta}, \tilde{\mathbf{X}}}^\top \vU \diag{(\vlambda)}^{-\frac{1}{2}}  \Big[\diag{(\vlambda)}^{-\frac{1}{2}} \vU^\top \mathbf{J}_{\hat{\vtheta}, \tilde{\mathbf{X}}} \mathbf{J}_{\hat{\vtheta}, \mathbf{X}}^\top \mathbf{\Lambda}_{\mathbf{X}, \mathbf{Y}} \mathbf{J}_{\hat{\vtheta}, \mathbf{X}} \mathbf{J}_{\hat{\vtheta}, \tilde{\mathbf{X}}}^\top \vU \diag{(\vlambda)}^{-\frac{1}{2}} \\
& \qquad\qquad\qquad\qquad\qquad\qquad\qquad + {\mathbf{I}_K}/{\sigma_0^2}\Big]^{-1}  \diag{(\vlambda)}^{-\frac{1}{2}} \vU^\top \mathbf{J}_{\hat{\vtheta}, \tilde{\mathbf{X}}} J_{\hat{\vtheta}}(\vx)^\top   \\
=& J_{\hat{\vtheta}}(\vx) \mathbf{J}_{\hat{\vtheta}, \tilde{\mathbf{X}}}^\top \vU  \Big[ \vU^\top \mathbf{J}_{\hat{\vtheta}, \tilde{\mathbf{X}}} \mathbf{J}_{\hat{\vtheta}, \mathbf{X}}^\top \mathbf{\Lambda}_{\mathbf{X}, \mathbf{Y}} \mathbf{J}_{\hat{\vtheta}, \mathbf{X}} \mathbf{J}_{\hat{\vtheta}, \tilde{\mathbf{X}}}^\top \vU + {\diag{(\vlambda)}}/{\sigma_0^2}\Big]^{-1}  \vU^\top \mathbf{J}_{\hat{\vtheta}, \tilde{\mathbf{X}}} J_{\hat{\vtheta}}(\vx)^\top\\
=& J_{\hat{\vtheta}}(\vx) \mathbf{J}_{\hat{\vtheta}, \tilde{\mathbf{X}}}^\top \Big[\vU \vU^\top \mathbf{J}_{\hat{\vtheta}, \tilde{\mathbf{X}}} \mathbf{J}_{\hat{\vtheta}, \mathbf{X}}^\top \mathbf{\Lambda}_{\mathbf{X}, \mathbf{Y}} \mathbf{J}_{\hat{\vtheta}, \mathbf{X}} \mathbf{J}_{\hat{\vtheta}, \tilde{\mathbf{X}}}^\top \vU \vU^\top + {\vU\diag{(\vlambda)}\vU^\top}/{\sigma_0^2}\Big]^{-1}   \mathbf{J}_{\hat{\vtheta}, \tilde{\mathbf{X}}} J_{\hat{\vtheta}}(\vx)^\top\\
=& J_{\hat{\vtheta}}(\vx)\mathbf{J}_{\hat{\vtheta}, \tilde{\mathbf{X}}}^\top \Big[\mathbf{J}_{\hat{\vtheta}, \tilde{\mathbf{X}}} \mathbf{J}_{\hat{\vtheta}, \mathbf{X}}^\top \mathbf{\Lambda}_{\mathbf{X}, \mathbf{Y}} \mathbf{J}_{\hat{\vtheta}, \mathbf{X}} \mathbf{J}_{\hat{\vtheta}, \tilde{\mathbf{X}}}^\top + {\mathbf{J}_{\hat{\vtheta}, \tilde{\mathbf{X}}}\mathbf{J}_{\hat{\vtheta}, \tilde{\mathbf{X}}}^\top}/{\sigma_0^2}\Big]^{-1}   \mathbf{J}_{\hat{\vtheta}, \tilde{\mathbf{X}}} J_{\hat{\vtheta}}(\vx)^\top.
\end{align*}

\subsection{Proof of Theorem~\ref{theorem:0}}\label{app:proof:theorem:0}
\Thmzero*
\begin{proof}
With $\mathbf{K} = \mathbf{J}_{\hat{\vtheta}, \tilde{\mathbf{X}}}\mathbf{J}_{\hat{\vtheta}, \tilde{\mathbf{X}}}^\top \succ 0$,
let $\mathbf{P}=\mathbf{J}_{\hat{\vtheta}, \tilde{\mathbf{X}}}^\top \mathbf{K}^{-1} \mathbf{J}_{\hat{\vtheta}, \tilde{\mathbf{X}}}$.
By Woodbury matrix identity,
\begin{align*}
\scriptsize
\mathbf{\Sigma}' &= \mathbf{J}_{\hat{\vtheta}, \tilde{\mathbf{X}}}^\top \Big[\mathbf{J}_{\hat{\vtheta}, \tilde{\mathbf{X}}} \mathbf{J}_{\hat{\vtheta}, \mathbf{X}}^\top \mathbf{\Lambda}_{\mathbf{X}, \mathbf{Y}} \mathbf{J}_{\hat{\vtheta}, \mathbf{X}} \mathbf{J}_{\hat{\vtheta}, \tilde{\mathbf{X}}}^\top + {\mathbf{J}_{\hat{\vtheta}, \tilde{\mathbf{X}}}\mathbf{J}_{\hat{\vtheta}, \tilde{\mathbf{X}}}^\top}/{\sigma_0^2}\Big]^{-1}   \mathbf{J}_{\hat{\vtheta}, \tilde{\mathbf{X}}}\\
&= \mathbf{J}_{\hat{\vtheta}, \tilde{\mathbf{X}}}^\top \Big[\mathbf{J}_{\hat{\vtheta}, \tilde{\mathbf{X}}} \mathbf{J}_{\hat{\vtheta}, \mathbf{X}}^\top \mathbf{\Lambda}_{\mathbf{X}, \mathbf{Y}} \mathbf{J}_{\hat{\vtheta}, \mathbf{X}} \mathbf{J}_{\hat{\vtheta}, \tilde{\mathbf{X}}}^\top + {\mathbf{K}}/{\sigma_0^2}\Big]^{-1}   \mathbf{J}_{\hat{\vtheta}, \tilde{\mathbf{X}}}\\
&= \sigma_0^2\mathbf{J}_{\hat{\vtheta}, \tilde{\mathbf{X}}}^\top \Big[\mathbf{K}^{-1} - \mathbf{K}^{-1}  \mathbf{J}_{\hat{\vtheta}, \tilde{\mathbf{X}}} \mathbf{J}_{\hat{\vtheta}, \mathbf{X}}^\top (\mathbf{\Lambda}_{\mathbf{X}, \mathbf{Y}}^{-1}/\sigma_0^2 +  \mathbf{J}_{\hat{\vtheta}, \mathbf{X}} \mathbf{J}_{\hat{\vtheta}, \tilde{\mathbf{X}}}^\top \mathbf{K}^{-1} \mathbf{J}_{\hat{\vtheta}, \tilde{\mathbf{X}}} \mathbf{J}_{\hat{\vtheta}, \mathbf{X}}^\top)^{-1} \mathbf{J}_{\hat{\vtheta}, \mathbf{X}} \mathbf{J}_{\hat{\vtheta}, \tilde{\mathbf{X}}}^\top \mathbf{K}^{-1} \Big]   \mathbf{J}_{\hat{\vtheta}, \tilde{\mathbf{X}}} \\
&=\sigma_0^2\Big[\mathbf{P} - \mathbf{P} \mathbf{J}_{\hat{\vtheta}, \mathbf{X}}^\top (\mathbf{\Lambda}_{\mathbf{X}, \mathbf{Y}}^{-1}/\sigma_0^2 +  \mathbf{J}_{\hat{\vtheta}, \mathbf{X}} \mathbf{P} \mathbf{J}_{\hat{\vtheta}, \mathbf{X}}^\top)^{-1} \mathbf{J}_{\hat{\vtheta}, \mathbf{X}} \mathbf{P}\Big].
\end{align*}
It is interesting to see that $\mathbf{P}$ is a projector: $\mathbf{P}^2 = \mathbf{P}$ and $\mathbf{P}\mathbf{J}_{\hat{\vtheta}, \tilde{\mathbf{X}}}^\top = \mathbf{J}_{\hat{\vtheta}, \tilde{\mathbf{X}}}^\top$.
Then
\begin{align*}
\scriptsize
\mathbf{\Sigma}' = \sigma_0^2\Big[\mathbf{P} - \mathbf{P} \mathbf{J}_{\hat{\vtheta}, \mathbf{X}}^\top (\mathbf{\Lambda}_{\mathbf{X}, \mathbf{Y}}^{-1}/\sigma_0^2 +  \mathbf{J}_{\hat{\vtheta}, \mathbf{X}} \mathbf{P} \mathbf{P} \mathbf{J}_{\hat{\vtheta}, \mathbf{X}}^\top)^{-1} \mathbf{J}_{\hat{\vtheta}, \mathbf{X}} \mathbf{P}\Big].
\end{align*}
By Woodbury matrix identity again,
\begin{align*}
\scriptsize
\mathbf{\Sigma}' &= \sigma_0^2\Bigg[\mathbf{P} - \sigma_0^2 \mathbf{P} \mathbf{J}_{\hat{\vtheta}, \mathbf{X}}^\top \Big(\mathbf{\Lambda}_{\mathbf{X}, \mathbf{Y}} - \mathbf{\Lambda}_{\mathbf{X}, \mathbf{Y}} \mathbf{J}_{\hat{\vtheta}, \mathbf{X}} \mathbf{P} (\mathbf{I}_P/\sigma_0^2 \\
& \qquad\qquad\qquad\qquad\qquad\qquad+ \mathbf{P} \mathbf{J}_{\hat{\vtheta}, \mathbf{X}}^\top \mathbf{\Lambda}_{\mathbf{X}, \mathbf{Y}} \mathbf{J}_{\hat{\vtheta}, \mathbf{X}} \mathbf{P})^{-1} \mathbf{P} \mathbf{J}_{\hat{\vtheta}, \mathbf{X}}^\top \mathbf{\Lambda}_{\mathbf{X}, \mathbf{Y}} \Big) \mathbf{J}_{\hat{\vtheta}, \mathbf{X}} \mathbf{P}\Bigg] \\
&= \sigma_0^2\Bigg[\mathbf{P} - \sigma_0^2 \Big(\mathbf{P} \mathbf{J}_{\hat{\vtheta}, \mathbf{X}}^\top \mathbf{\Lambda}_{\mathbf{X}, \mathbf{Y}}\mathbf{J}_{\hat{\vtheta}, \mathbf{X}} \mathbf{P} - \mathbf{P} \mathbf{J}_{\hat{\vtheta}, \mathbf{X}}^\top\mathbf{\Lambda}_{\mathbf{X}, \mathbf{Y}} \mathbf{J}_{\hat{\vtheta}, \mathbf{X}} \mathbf{P} (\mathbf{I}_P/\sigma_0^2 \\
& \qquad\qquad\qquad\qquad\qquad\qquad+ \mathbf{P} \mathbf{J}_{\hat{\vtheta}, \mathbf{X}}^\top \mathbf{\Lambda}_{\mathbf{X}, \mathbf{Y}} \mathbf{J}_{\hat{\vtheta}, \mathbf{X}} \mathbf{P})^{-1} \mathbf{P} \mathbf{J}_{\hat{\vtheta}, \mathbf{X}}^\top \mathbf{\Lambda}_{\mathbf{X}, \mathbf{Y}}\mathbf{J}_{\hat{\vtheta}, \mathbf{X}} \mathbf{P} \Big)\Bigg].
\end{align*}
Let $\Scale[0.95]{\mathbf{T}=(\mathbf{P} \mathbf{J}_{\hat{\vtheta}, \mathbf{X}}^\top \mathbf{\Lambda}_{\mathbf{X}, \mathbf{Y}} \mathbf{J}_{\hat{\vtheta}, \mathbf{X}} \mathbf{P} + \mathbf{I}_P/\sigma_0^2)^{-1}}$, so $\Scale[0.95]{\mathbf{P} \mathbf{J}_{\hat{\vtheta}, \mathbf{X}}^\top \mathbf{\Lambda}_{\mathbf{X}, \mathbf{Y}} \mathbf{J}_{\hat{\vtheta}, \mathbf{X}} \mathbf{P} = \mathbf{T}^{-1} - \mathbf{I}_P/\sigma_0^2}$.
It follows that
\begin{align*}
\scriptsize
\mathbf{\Sigma}' &= \sigma_0^2\Big[\mathbf{P} - \sigma_0^2 \Big(\mathbf{T}^{-1} - \mathbf{I}_P/\sigma_0^2 - (\mathbf{T}^{-1} - \mathbf{I}_P/\sigma_0^2)\mathbf{T} (\mathbf{T}^{-1} - \mathbf{I}_P/\sigma_0^2) \Big)\Big]= \mathbf{T} + \sigma_0^2(\mathbf{P} -\mathbf{I}_P).%
\end{align*}

As a result,
\begin{align*}
\scriptsize
\mathcal{E} &= \Vert \mathbf{\Sigma}' - \mathbf{\Sigma} \Vert \\
& =\Vert \mathbf{T} + \sigma_0^2(\mathbf{P} -\mathbf{I}_P) - \mathbf{\Sigma} \Vert \\
& \leq \Vert \mathbf{T} - \mathbf{\Sigma}\Vert + \sigma_0^2 \Vert\mathbf{P} -\mathbf{I}_P\Vert \\
& = \Vert \mathbf{T} - \mathbf{\Sigma}\Vert + \sigma_0^2,
\end{align*}
where we leverage the fact that the eigenvalue of the projector $\mathbf{P}$ is either $1$ or $0$. And
\begin{align*}
\scriptsize
\Vert \mathbf{T} - \mathbf{\Sigma}\Vert &= \Vert - \mathbf{T}(\mathbf{T}^{-1} - \mathbf{\Sigma}^{-1}) \mathbf{\Sigma}\Vert \\
&= \Vert - \mathbf{T}(\mathbf{P} \mathbf{J}_{\hat{\vtheta}, \mathbf{X}}^\top \mathbf{\Lambda}_{\mathbf{X}, \mathbf{Y}} \mathbf{J}_{\hat{\vtheta}, \mathbf{X}} \mathbf{P} - \mathbf{J}_{\hat{\vtheta}, \mathbf{X}}^\top \mathbf{\Lambda}_{\mathbf{X}, \mathbf{Y}} \mathbf{J}_{\hat{\vtheta}, \mathbf{X}}) \mathbf{\Sigma}\Vert  \\
&\leq \Vert\mathbf{T}\Vert \Vert\mathbf{\Sigma}\Vert \Vert \mathbf{P} \mathbf{J}_{\hat{\vtheta}, \mathbf{X}}^\top \mathbf{\Lambda}_{\mathbf{X}, \mathbf{Y}} \mathbf{J}_{\hat{\vtheta}, \mathbf{X}} \mathbf{P} - \mathbf{J}_{\hat{\vtheta}, \mathbf{X}}^\top \mathbf{\Lambda}_{\mathbf{X}, \mathbf{Y}} \mathbf{J}_{\hat{\vtheta}, \mathbf{X}} \Vert\\
&\leq \frac{\Vert \mathbf{P} \mathbf{J}_{\hat{\vtheta}, \mathbf{X}}^\top \mathbf{\Lambda}_{\mathbf{X}, \mathbf{Y}} \mathbf{J}_{\hat{\vtheta}, \mathbf{X}} \mathbf{P} - \mathbf{J}_{\hat{\vtheta}, \mathbf{X}}^\top \mathbf{\Lambda}_{\mathbf{X}, \mathbf{Y}} \mathbf{J}_{\hat{\vtheta}, \mathbf{X}} \Vert}{\lambda_\text{min}(\mathbf{P} \mathbf{J}_{\hat{\vtheta}, \mathbf{X}}^\top \mathbf{\Lambda}_{\mathbf{X}, \mathbf{Y}} \mathbf{J}_{\hat{\vtheta}, \mathbf{X}} \mathbf{P} + \mathbf{I}_P/\sigma_0^2) \lambda_\text{min}(\mathbf{J}_{\hat{\vtheta}, \mathbf{X}}^\top \mathbf{\Lambda}_{\mathbf{X}, \mathbf{Y}} \mathbf{J}_{\hat{\vtheta}, \mathbf{X}} + \mathbf{I}_P/\sigma_0^2)}\\
&\leq \sigma_0^4\Vert \mathbf{P} \mathbf{J}_{\hat{\vtheta}, \mathbf{X}}^\top \mathbf{\Lambda}_{\mathbf{X}, \mathbf{Y}} \mathbf{J}_{\hat{\vtheta}, \mathbf{X}} \mathbf{P} - \mathbf{J}_{\hat{\vtheta}, \mathbf{X}}^\top \mathbf{\Lambda}_{\mathbf{X}, \mathbf{Y}} \mathbf{J}_{\hat{\vtheta}, \mathbf{X}} \Vert,
\end{align*}
where $\lambda_\text{min}(\cdot)$ denotes the smallest eigenvalue of a matrix.
The last inequality holds due to that the eigenvalues of $\mathbf{P} \mathbf{J}_{\hat{\vtheta}, \mathbf{X}}^\top \mathbf{\Lambda}_{\mathbf{X}, \mathbf{Y}} \mathbf{J}_{\hat{\vtheta}, \mathbf{X}} \mathbf{P} + \mathbf{I}_P/\sigma_0^2$ and $\mathbf{J}_{\hat{\vtheta}, \mathbf{X}}^\top \mathbf{\Lambda}_{\mathbf{X}, \mathbf{Y}} \mathbf{J}_{\hat{\vtheta}, \mathbf{X}} + \mathbf{I}_P/\sigma_0^2$ are larger than or equal to $1/\sigma_0^2$ as $\mathbf{P} \mathbf{J}_{\hat{\vtheta}, \mathbf{X}}^\top \mathbf{\Lambda}_{\mathbf{X}, \mathbf{Y}} \mathbf{J}_{\hat{\vtheta}, \mathbf{X}} \mathbf{P}$ and $\mathbf{J}_{\hat{\vtheta}, \mathbf{X}}^\top \mathbf{\Lambda}_{\mathbf{X}, \mathbf{Y}} \mathbf{J}_{\hat{\vtheta}, \mathbf{X}}$ are SPSD.

To estimate the upper bound of $\Vert \mathbf{T} - \mathbf{\Sigma}\Vert$, we lay out the following Lemma.

\begin{restatable}[Proposition of Weinstein–Aronszajn identity]{lemm}{Lem}
\label{lemma:0}
If $\mathcal{A}$ and $\mathcal{B}$ are matrices of size $m \times n$ and $n \times m$ respectively, the non-zero eigenvalues of $\mathcal{A}\mathcal{B}$ and $\mathcal{B}\mathcal{A}$ are the same.
\end{restatable}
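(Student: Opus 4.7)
The plan is to establish the Weinstein--Aronszajn identity by a direct eigenvector-transfer argument, which avoids any determinantal machinery and handles the asymmetry $m \neq n$ transparently. The idea is that if $\mathcal{A}\mathcal{B}$ has a nonzero eigenvalue $\lambda$ with eigenvector $v$, then $\mathcal{B}v$ is an eigenvector of $\mathcal{B}\mathcal{A}$ for the same $\lambda$, and the symmetric statement yields the reverse inclusion.

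Concretely, I would first fix any nonzero eigenvalue $\lambda$ of $\mathcal{A}\mathcal{B}$ together with a vector $v \in \mathbb{R}^m \setminus \{0\}$ satisfying $\mathcal{A}\mathcal{B}v = \lambda v$, and define $u := \mathcal{B}v \in \mathbb{R}^n$. The computation
\[
  \mathcal{B}\mathcal{A}\, u = \mathcal{B}(\mathcal{A}\mathcal{B}v) = \lambda\, \mathcal{B}v = \lambda u
\]
is immediate. The essential sub-step is to check $u \neq 0$: if instead $\mathcal{B}v = 0$, then $\mathcal{A}\mathcal{B}v = 0$, which contradicts $\mathcal{A}\mathcal{B}v = \lambda v \neq 0$ (since $\lambda \neq 0$ and $v \neq 0$). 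Hence $\lambda$ is a nonzero eigenvalue of $\mathcal{B}\mathcal{A}$. Then I would swap the roles of $\mathcal{A}$ and $\mathcal{B}$ to obtain the reverse inclusion, which combined with the first direction gives equality of the sets of nonzero eigenvalues.

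The main obstacle, if any, is philosophical rather than technical: one must decide whether ``the same'' is meant as equality of sets or equality of multisets with algebraic multiplicities. The eigenvector argument above directly gives only set equality, which is what the statement literally asserts and what is needed in the proof of \cref{theorem:0} (where it is applied to identify the nonzero spectra of $\mathbf{P}\mathbf{J}_{\hat{\vtheta}, \mathbf{X}}^\top \mathbf{\Lambda}_{\mathbf{X}, \mathbf{Y}} \mathbf{J}_{\hat{\vtheta}, \mathbf{X}} \mathbf{P}$ and $\mathbf{J}_{\hat{\vtheta}, \mathbf{X}}^\top \mathbf{\Lambda}_{\mathbf{X}, \mathbf{Y}} \mathbf{J}_{\hat{\vtheta}, \mathbf{X}}$, say). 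If multiplicities were required, I would instead invoke Sylvester's determinant identity $\det(I_m - t\mathcal{A}\mathcal{B}) = \det(I_n - t\mathcal{B}\mathcal{A})$, which rearranges to $\lambda^{n}\det(\lambda I_m - \mathcal{A}\mathcal{B}) = \lambda^{m}\det(\lambda I_n - \mathcal{B}\mathcal{A})$ and therefore forces the characteristic polynomials to agree up to a factor of $\lambda^{|m-n|}$; this preserves algebraic multiplicities of all nonzero roots. Either route closes the proof, but the eigenvector construction is cleaner and fully sufficient here.
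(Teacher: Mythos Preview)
Your argument is correct. The eigenvector-transfer construction $u := \mathcal{B}v$ with the non-vanishing check is the standard elementary proof, and the determinantal alternative you sketch via $\lambda^{n}\det(\lambda I_m - \mathcal{A}\mathcal{B}) = \lambda^{m}\det(\lambda I_n - \mathcal{B}\mathcal{A})$ is the right way to upgrade to multiplicities if needed.

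There is nothing to compare against: the paper does not prove this lemma at all. It is stated as a named classical fact (``Proposition of Weinstein--Aronszajn identity'') and immediately invoked inside the proof of \cref{theorem:0}. So your write-up is strictly more than what the paper provides.

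One minor remark on your parenthetical about the application: the paper does not use the lemma to match the spectra of $\mathbf{P}\mathbf{J}_{\hat{\vtheta}, \mathbf{X}}^\top \mathbf{\Lambda}_{\mathbf{X}, \mathbf{Y}} \mathbf{J}_{\hat{\vtheta}, \mathbf{X}}\mathbf{P}$ and $\mathbf{J}_{\hat{\vtheta}, \mathbf{X}}^\top \mathbf{\Lambda}_{\mathbf{X}, \mathbf{Y}} \mathbf{J}_{\hat{\vtheta}, \mathbf{X}}$. It is invoked to cyclically move factors inside a spectral norm, passing from $\Vert(\mathbf{P}+\mathbf{I}_P)\mathbf{J}_{\hat{\vtheta}, \mathbf{X}}^\top \mathbf{\Lambda}_{\mathbf{X}, \mathbf{Y}}\mathbf{J}_{\hat{\vtheta}, \mathbf{X}}(\mathbf{P}-\mathbf{I}_P)\Vert$ to $\Vert\mathbf{\Lambda}_{\mathbf{X}, \mathbf{Y}}\mathbf{J}_{\hat{\vtheta}, \mathbf{X}}(\mathbf{P}-\mathbf{I}_P)(\mathbf{P}+\mathbf{I}_P)\mathbf{J}_{\hat{\vtheta}, \mathbf{X}}^\top\Vert$. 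This has no bearing on the correctness of your proof of the lemma itself, but you may want to adjust that aside if you keep it.
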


It follows that
\begin{align*}
\scriptsize
\Vert \mathbf{T} - \mathbf{\Sigma}\Vert \leq &\sigma_0^4 \Vert \mathbf{P} \mathbf{J}_{\hat{\vtheta}, \mathbf{X}}^\top \mathbf{\Lambda}_{\mathbf{X}, \mathbf{Y}} \mathbf{J}_{\hat{\vtheta}, \mathbf{X}} \mathbf{P} - \mathbf{J}_{\hat{\vtheta}, \mathbf{X}}^\top \mathbf{\Lambda}_{\mathbf{X}, \mathbf{Y}} \mathbf{J}_{\hat{\vtheta}, \mathbf{X}} \Vert\\
=& \frac{\sigma_0^4}{2}\Vert (\mathbf{P} \mathbf{J}_{\hat{\vtheta}, \mathbf{X}}^\top \mathbf{\Lambda}_{\mathbf{X}, \mathbf{Y}}^{\frac{1}{2}} + \mathbf{J}_{\hat{\vtheta}, \mathbf{X}}^\top \mathbf{\Lambda}_{\mathbf{X}, \mathbf{Y}}^{\frac{1}{2}})(\mathbf{\Lambda}_{\mathbf{X}, \mathbf{Y}}^{\frac{1}{2}} \mathbf{J}_{\hat{\vtheta}, \mathbf{X}}\mathbf{P}  - \mathbf{\Lambda}_{\mathbf{X}, \mathbf{Y}}^{\frac{1}{2}}\mathbf{J}_{\hat{\vtheta}, \mathbf{X}}) \\
&\qquad\qquad+ (\mathbf{P} \mathbf{J}_{\hat{\vtheta}, \mathbf{X}}^\top \mathbf{\Lambda}_{\mathbf{X}, \mathbf{Y}}^{\frac{1}{2}} - \mathbf{J}_{\hat{\vtheta}, \mathbf{X}}^\top \mathbf{\Lambda}_{\mathbf{X}, \mathbf{Y}}^{\frac{1}{2}})(\mathbf{\Lambda}_{\mathbf{X}, \mathbf{Y}}^{\frac{1}{2}} \mathbf{J}_{\hat{\vtheta}, \mathbf{X}}\mathbf{P} + \mathbf{\Lambda}_{\mathbf{X}, \mathbf{Y}}^{\frac{1}{2}}\mathbf{J}_{\hat{\vtheta}, \mathbf{X}})\Vert \\ %
\leq& \frac{\sigma_0^4}{2}\Vert (\mathbf{P} \mathbf{J}_{\hat{\vtheta}, \mathbf{X}}^\top \mathbf{\Lambda}_{\mathbf{X}, \mathbf{Y}}^{\frac{1}{2}} + \mathbf{J}_{\hat{\vtheta}, \mathbf{X}}^\top \mathbf{\Lambda}_{\mathbf{X}, \mathbf{Y}}^{\frac{1}{2}})(\mathbf{\Lambda}_{\mathbf{X}, \mathbf{Y}}^{\frac{1}{2}} \mathbf{J}_{\hat{\vtheta}, \mathbf{X}}\mathbf{P}  - \mathbf{\Lambda}_{\mathbf{X}, \mathbf{Y}}^{\frac{1}{2}}\mathbf{J}_{\hat{\vtheta}, \mathbf{X}})\Vert \\
&\qquad\qquad+ \frac{\sigma_0^4}{2}\Vert ((\mathbf{P} \mathbf{J}_{\hat{\vtheta}, \mathbf{X}}^\top \mathbf{\Lambda}_{\mathbf{X}, \mathbf{Y}}^{\frac{1}{2}} + \mathbf{J}_{\hat{\vtheta}, \mathbf{X}}^\top \mathbf{\Lambda}_{\mathbf{X}, \mathbf{Y}}^{\frac{1}{2}})(\mathbf{\Lambda}_{\mathbf{X}, \mathbf{Y}}^{\frac{1}{2}} \mathbf{J}_{\hat{\vtheta}, \mathbf{X}}\mathbf{P}  - \mathbf{\Lambda}_{\mathbf{X}, \mathbf{Y}}^{\frac{1}{2}}\mathbf{J}_{\hat{\vtheta}, \mathbf{X}}))^\top\Vert \\
=& \sigma_0^4\Vert (\mathbf{P} \mathbf{J}_{\hat{\vtheta}, \mathbf{X}}^\top \mathbf{\Lambda}_{\mathbf{X}, \mathbf{Y}}^{\frac{1}{2}} + \mathbf{J}_{\hat{\vtheta}, \mathbf{X}}^\top \mathbf{\Lambda}_{\mathbf{X}, \mathbf{Y}}^{\frac{1}{2}})(\mathbf{\Lambda}_{\mathbf{X}, \mathbf{Y}}^{\frac{1}{2}} \mathbf{J}_{\hat{\vtheta}, \mathbf{X}}\mathbf{P}  - \mathbf{\Lambda}_{\mathbf{X}, \mathbf{Y}}^{\frac{1}{2}}\mathbf{J}_{\hat{\vtheta}, \mathbf{X}})\Vert \tag*{($\Vert\mathbf{A}^\top\Vert=\Vert\mathbf{A}\Vert$)} \\
=& \sigma_0^4\Vert (\mathbf{P}+\mathbf{I}_P) \mathbf{J}_{\hat{\vtheta}, \mathbf{X}}^\top \mathbf{\Lambda}_{\mathbf{X}, \mathbf{Y}} \mathbf{J}_{\hat{\vtheta}, \mathbf{X}}(\mathbf{P} -\mathbf{I}_P)\Vert\\
=& \sigma_0^4\Vert  \mathbf{\Lambda}_{\mathbf{X}, \mathbf{Y}} \mathbf{J}_{\hat{\vtheta}, \mathbf{X}}(\mathbf{P} -\mathbf{I}_P) (\mathbf{P}+\mathbf{I}_P) \mathbf{J}_{\hat{\vtheta}, \mathbf{X}}^\top \Vert \tag*{(Lemma~\ref{lemma:0})}\\
=& \sigma_0^4\Vert  \mathbf{\Lambda}_{\mathbf{X}, \mathbf{Y}} \mathbf{J}_{\hat{\vtheta}, \mathbf{X}}(\mathbf{P} -\mathbf{I}_P) \mathbf{J}_{\hat{\vtheta}, \mathbf{X}}^\top \Vert \tag*{($\mathbf{P}^2 = \mathbf{P}$)}\\
=& \sigma_0^4\Vert  \mathbf{\Lambda}_{\mathbf{X}, \mathbf{Y}} (\mathbf{J}_{\hat{\vtheta}, \mathbf{X}}\mathbf{J}_{\hat{\vtheta}, \tilde{\mathbf{X}}}^\top (\mathbf{J}_{\hat{\vtheta}, \tilde{\mathbf{X}}}\mathbf{J}_{\hat{\vtheta}, \tilde{\mathbf{X}}}^\top)^{-1} \mathbf{J}_{\hat{\vtheta}, \tilde{\mathbf{X}}}\mathbf{J}_{\hat{\vtheta}, \mathbf{X}}^\top -\mathbf{J}_{\hat{\vtheta}, \mathbf{X}}  \mathbf{J}_{\hat{\vtheta}, \mathbf{X}}^\top)  \Vert\\
\leq & \sigma_0^4\Vert  \mathbf{\Lambda}_{\mathbf{X}, \mathbf{Y}} \Vert  \Vert \mathbf{J}_{\hat{\vtheta}, \mathbf{X}}\mathbf{J}_{\hat{\vtheta}, \tilde{\mathbf{X}}}^\top(\mathbf{J}_{\hat{\vtheta}, \tilde{\mathbf{X}}}\mathbf{J}_{\hat{\vtheta}, \tilde{\mathbf{X}}}^\top)^{-1} \mathbf{J}_{\hat{\vtheta}, \tilde{\mathbf{X}}}\mathbf{J}_{\hat{\vtheta}, \mathbf{X}}^\top - \mathbf{J}_{\hat{\vtheta}, \mathbf{X}}  \mathbf{J}_{\hat{\vtheta}, \mathbf{X}}^\top \Vert.
\end{align*}

It is easy to show that $\Vert  \mathbf{\Lambda}_{\mathbf{X}, \mathbf{Y}} \Vert \leq c_\Lambda$ with $c_\Lambda$ as a finite constant.
For example, when handling regression tasks, $-\log p(\vy|\vg)$ typically boils down to $\frac{1}{2\sigma_\text{noise}^2}(\vg - \vy)\top (\vg - \vy)$ where $\sigma_\text{noise}^2$ refers to the variance of data noise, so $-\nabla_{\vg\vg}^2 \log p(\vy|\vg) = \frac{1}{\sigma_\text{noise}^2}\mathbf{I}_C$ and $c_\Lambda = \frac{1}{\sigma_\text{noise}^2}$.

When facing classification cases, $-\log p(\vy|\vg)$ becomes the softmax cross-entropy loss.
It is guaranteed that $c_\Lambda \leq 2$ because
\begin{align*}
\scriptsize
\Vert-\nabla_{\vg\vg}^2 \log p(\vy|\vg)\Vert =&\Vert\diag{(\softmax{(\vg)})} - \softmax{(\vg)} \softmax{(\vg)}^\top\Vert \\
\leq &  \Vert \diag{(\softmax{(\vg)})} \Vert + \Vert \softmax{(\vg)} \softmax{(\vg)}^\top \Vert\\
\leq &  1 + \Vert  \softmax{(\vg)}^\top \softmax{(\vg)} \Vert \leq  1 + 1=2.
\end{align*}

To summarize, we have the following bound for the approximation error $\mathcal{E}$:
\begin{align*}
\scriptsize
\mathcal{E} \leq \Vert \mathbf{T} - \mathbf{\Sigma}\Vert + \sigma_0^2 = \sigma_0^4 c_\Lambda \Vert \mathbf{J}_{\hat{\vtheta}, \mathbf{X}}\mathbf{J}_{\hat{\vtheta}, \tilde{\mathbf{X}}}^\top(\mathbf{J}_{\hat{\vtheta}, \tilde{\mathbf{X}}}\mathbf{J}_{\hat{\vtheta}, \tilde{\mathbf{X}}}^\top)^{-1} \mathbf{J}_{\hat{\vtheta}, \tilde{\mathbf{X}}}\mathbf{J}_{\hat{\vtheta}, \mathbf{X}}^\top - \mathbf{J}_{\hat{\vtheta}, \mathbf{X}}  \mathbf{J}_{\hat{\vtheta}, \mathbf{X}}^\top \Vert  + \sigma_0^2.
\end{align*}
\end{proof}

\section{How Does ELLA Address the Scalability Issues of Nystr\"{o}m Approximation}

Existing works like NeuralEF~\cite{deng2022neuralef} highlight the scalability issues of the Nystr\"{o}m approximation when applied to NTKs, but they have been successfully addressed by ELLA, which reflects our technical novelty.
On the one hand, the cost of eigendecomposing the $M \times M$ matrix in our case is low considering that we set $M < 10^4$. As shown in \cref{fig:approx-error}(a)(b), $M=2000$ can lead to reasonably small approximation errors for both the Nystr\"{o}m approximation and the resulting ELLA covariance.
On the other hand, the prediction cost at test time is reduced by two innovations in this work.
Firstly, looking at \cref{eq:14}, typical Nystr\"{o}m method first computes $J_{\hat{\vtheta}}(\vx, i)\mathbf{J}_{\hat{\vtheta},\tilde{\mathbf{X}}}^\top$ (i.e., evaluate the NTK similarities between $(\vx, i)$ and all training data $\tilde{\mathbf{X}}$) and then multiplies the result with $\mathbf{u}_k \in \mathbb{R}^M$ to get the evaluation of $k$-th eigenfunction.
So it needs to store $\mathbf{J}_{\hat{\vtheta},\tilde{\mathbf{X}}}\in \mathbb{R}^{M\times P}$ and compute $M$ Jacobian-vector products (JVPs).
But this work proposes to first compute $\mathbf{v}_k=\mathbf{J}_{\hat{\vtheta},\tilde{\mathbf{X}}}^\top \mathbf{u}_k, k=1,...,K$ (we omit the scalar multiplier) and then obtain the evaluation of $k$-th eigenfunction by $J_{\hat{\vtheta}}(\vx, i)\mathbf{v}_k$. Namely, we only need to store $K$ vectors of size $P$ and compute $K$ JVPs.
In most cases, $M=2000$ and $K=20$, so the benefits of our technique are obvious.
Secondly, we leverage forward-mode autodiff (fwAD) to efficiently compute the JVPs where we do not need to explicitly compute and store the Jacobian $J_{\hat{\vtheta}}(\vx, i)$ but to invoke once fwAD.
Further, fwAD enables the parallel evaluation of JVPs over all output dimensions, i.e., we can concurrently compute $J_{\hat{\vtheta}}(\vx,i)\mathbf{v}_k, i=1,...,C$ in one forward pass.
These techniques make the evaluation of ELLA equal to performing $K$ forward passes under the scope of fwAD.
This is similar to other BNNs that perform $S$ forward passes with different MC parameter samples to estimate the posterior predictive.

\section{More Experimental Results}

\subsection{Test NLL of ELLA Varies w.r.t. K and M}
\label{app:1:1}

\begin{table}[t]
  \centering
 \footnotesize
 \caption{\small Test NLL of ELLA varies w.r.t. $M$ and $K$ ($M\geq K$) on CIFAR-10 with ResNet-20 architecture. The experiments for $K\geq 256$ are time-consuming so we have not included the corresponding results here.}
 \vskip 0.1in
  \label{table:6}
  \setlength{\tabcolsep}{7pt}
  \begin{tabular}{@{}lcccccc@{}}
  \toprule
$M$\textbackslash $K$ & \multicolumn{1}{c}{4} & \multicolumn{1}{c}{8} & \multicolumn{1}{c}{16}& \multicolumn{1}{c}{32}& \multicolumn{1}{c}{64} & \multicolumn{1}{c}{128}\\% &NLL & ECE\\
\midrule
4 & 0.2689 & & & & & \\
8 & 0.2690 & 0.2561 & & & &\\
16 & 0.2660 & 0.2548 & 0.2392 & & &\\
32 & 0.2655 & 0.2541 & 0.2395 & 0.2318 & & \\
64 & 0.2672 & 0.2527 & 0.2398 & 0.2312 & 0.2299 & \\
128 & 0.2674 & 0.2540 & 0.2383 & 0.2310 & 0.2298 & 0.2294\\
256 & 0.2679 & 0.2545 & 0.2382 & 0.2310 & 0.2300 & 0.2294\\
512 & 0.2678 & 0.2542 & 0.2376 & 0.2315 & 0.2292 & 0.2288\\
1024 & 0.2674 & 0.2545 & 0.2385 & 0.2314 & 0.2295 & 0.2292\\
2000 & 0.2673 & 0.2551 & 0.2380 & 0.2308 & 0.2301 & 0.2295\\
  \bottomrule
   \end{tabular}
\end{table}

\cref{table:6} presents how the test NLL of ELLA varies w.r.t. $M$ and $K$ on CIFAR-10 with ResNet-20 architecture.
Considering the results in \cref{fig:approx-error}, we conclude
that under the same $K$, a larger $M$ can bring closer approximation but it does not necessarily lead to a better test NLL.

\subsection{Test Accuracy and ECE of ELLA Vary w.r.t. N}
\label{app:1:2}

\begin{figure}[t]
\centering
    \includegraphics[width=0.99\linewidth]{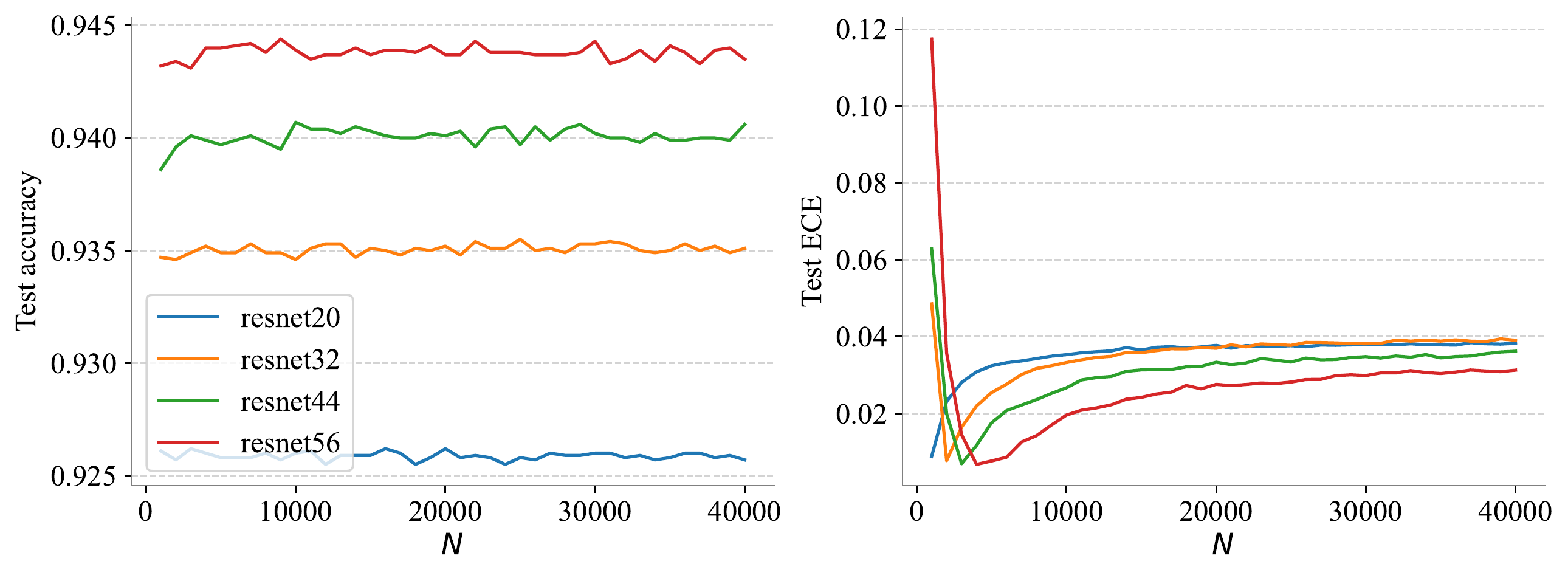}
\caption{\footnotesize The test accuracy and ECE of ELLA vary w.r.t. the number of training data $N$ on CIFAR-10.}
\label{fig:test-acc-ece}
\vspace{-.0ex}
\end{figure}
\cref{fig:test-acc-ece} presents how the test accuracy and ECE of ELLA vary w.r.t. the number of training data $N$ on CIFAR-10.

\subsection{Test NLL of LLA* Varies w.r.t. N}
\label{app:1:3}
\begin{figure}[t]
\centering
    \begin{subfigure}[b]{0.45\linewidth}
    \centering
    \includegraphics[width=\linewidth]{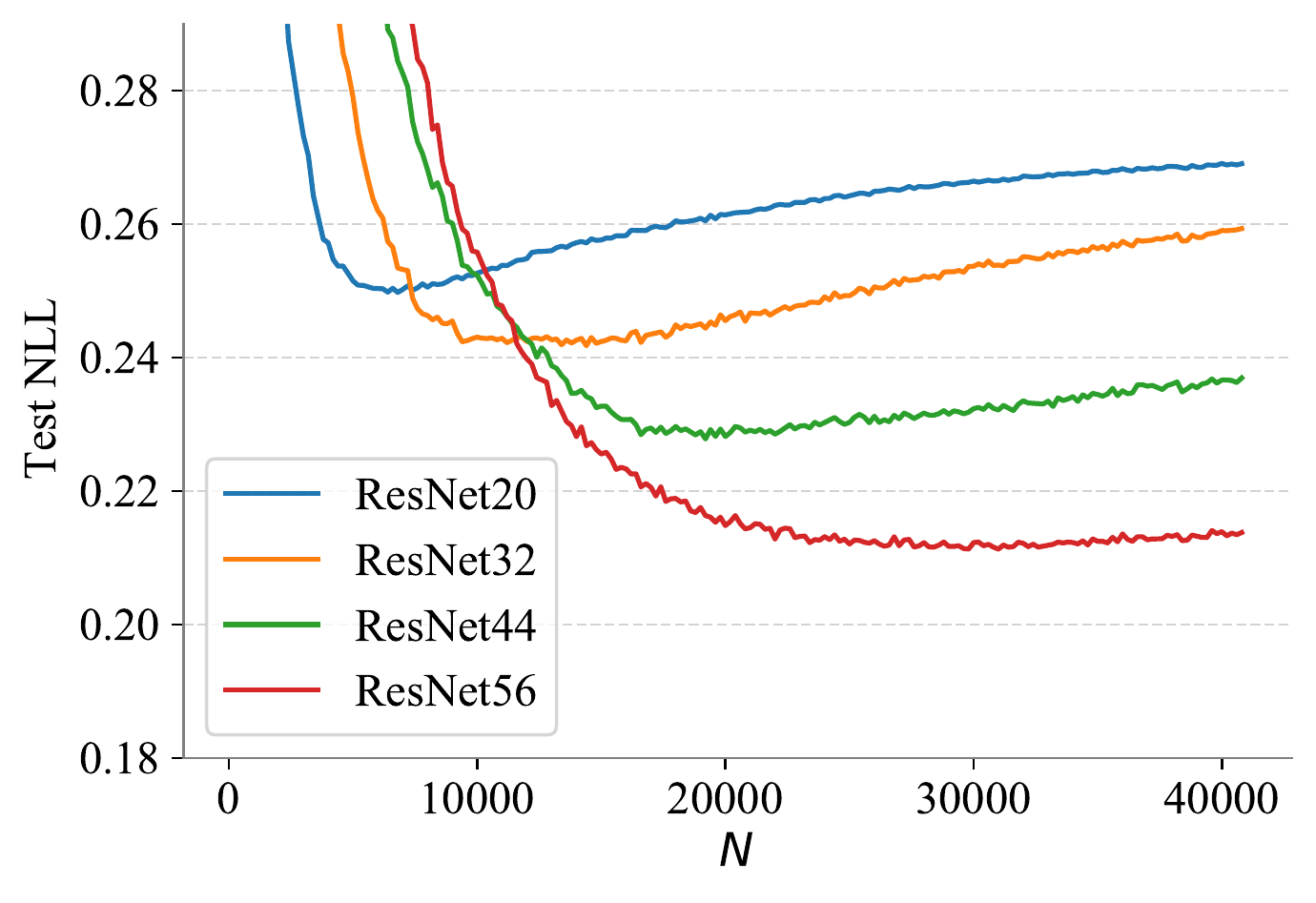}
        \caption{\scriptsize With prior precision tuned according to marginal likelihood}
    \end{subfigure}
    \begin{subfigure}[b]{0.45\linewidth}
    \centering
    \includegraphics[width=\linewidth]{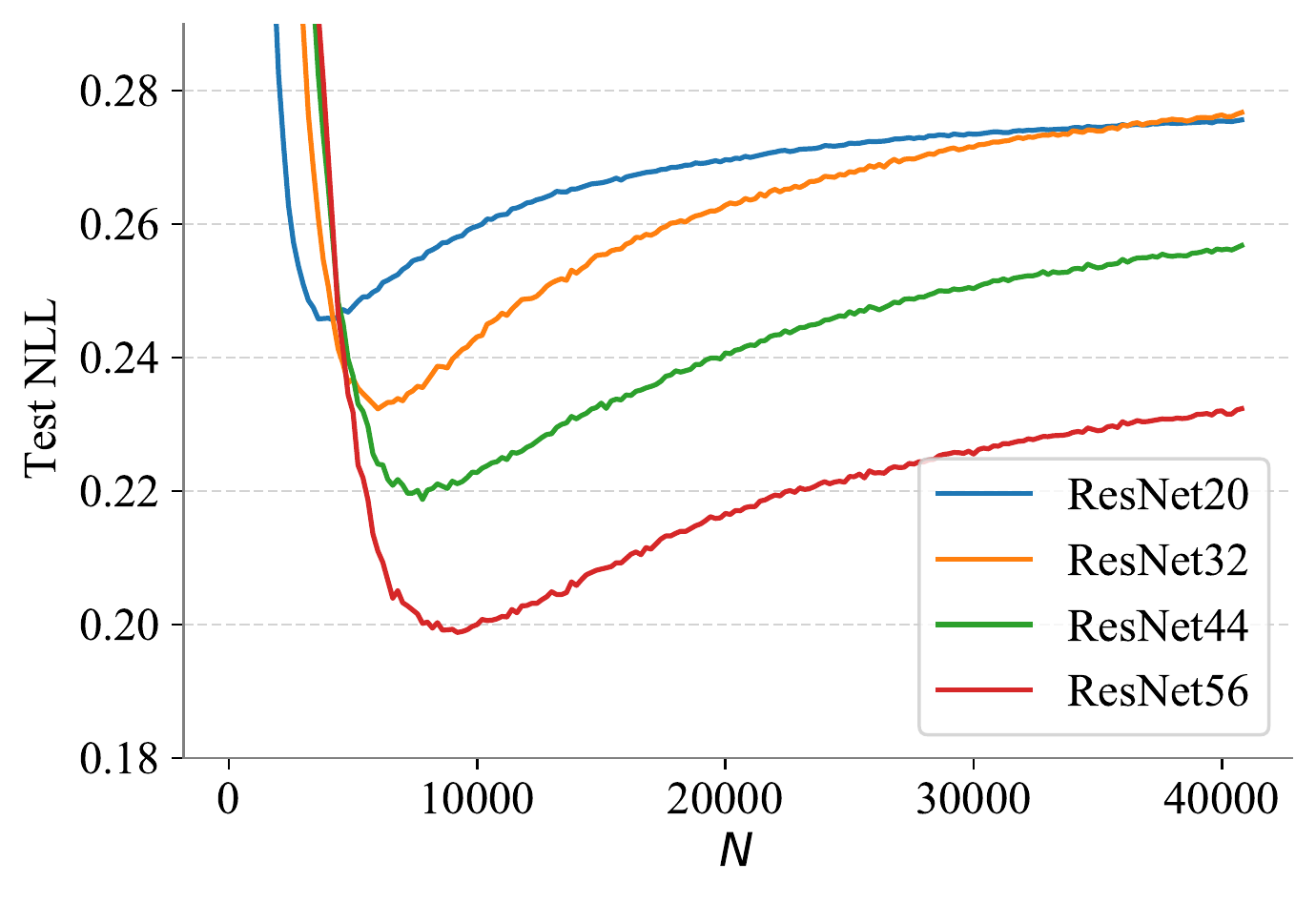}
        \caption{\scriptsize With fixed prior precision}
    \end{subfigure}
\caption{\footnotesize The test NLL of LLA$^*$ varies w.r.t. the number of training data $N$  on CIFAR-10. The y axis is aligned with that of \cref{fig:approx-error}(c).}
\label{fig:approx-error-llastar}
\end{figure}

\cref{fig:approx-error-llastar} shows how the test NLL of LLA$^*$ varies w.r.t. the number of training data $N$ on CIFAR-10. These results, as well as those in \cref{sec:overfitting}, reflect that the overfitting issue of LLA generally exists. We can also see that properly tuning the prior precision can alleviate the overfitting of LLA$^*$ to some extent but cannot fully resolve it.

\subsection{Comparison on the Approximation Error to Vanilla LLA}
\label{app:1:5}

\begin{table}[t]
  \centering
 \footnotesize
 \caption{\small Comparison on the approximation error to vanilla LLA in the illustrative regression case, measured by the KL divergence between the predictive distributions.}
 \vskip 0.1in
  \label{table:7}
  \setlength{\tabcolsep}{25pt}
  \begin{tabular}{@{}lcccc@{}}
  \toprule
 & \multicolumn{1}{c}{ELLA} & \multicolumn{1}{c}{LLA-KFAC} & \multicolumn{1}{c}{LLA-Diag}& \multicolumn{1}{c}{LLA$^*$}\\% &NLL & ECE\\
\midrule
KL div. & 0.83 &0.35&1.71&2.44\\
  \bottomrule
   \end{tabular}
\end{table}

\cref{table:7} lists the discrepancies between the predictive distributions of the approximate LLA method and vanilla LLA in the illustrative regression case detailed in \cref{sec:exp:illu}.
Here, considering the predictive distribution for one test datum is a Gaussian distribution, we use the KL divergence between the Gaussians yielded by the method of concern and LLA as a proxy of the approximation error (averaged over a set of test points).

\subsection{More Results on CIFAR-10 Corruptions}
\label{app:1:4}
\cref{fig:cifar-corruptions2}, \cref{fig:cifar-corruptions3}, and \cref{fig:cifar-corruptions4} show the results of the considered methods on CIFAR-10 corruptions using ResNet-20, ResNet-32, and ResNet-44 architectures respectively.
We can see that ELLA surpasses the baselines in aspects of NLL and ECE at various levels of skew.
These results signify ELLA's ability to make conservative predictions for OOD inputs.

\begin{figure*}[t]
\centering
\begin{minipage}{0.99\linewidth}
\centering
    \begin{subfigure}[b]{0.49\linewidth}
    \centering
    \includegraphics[width=\linewidth]{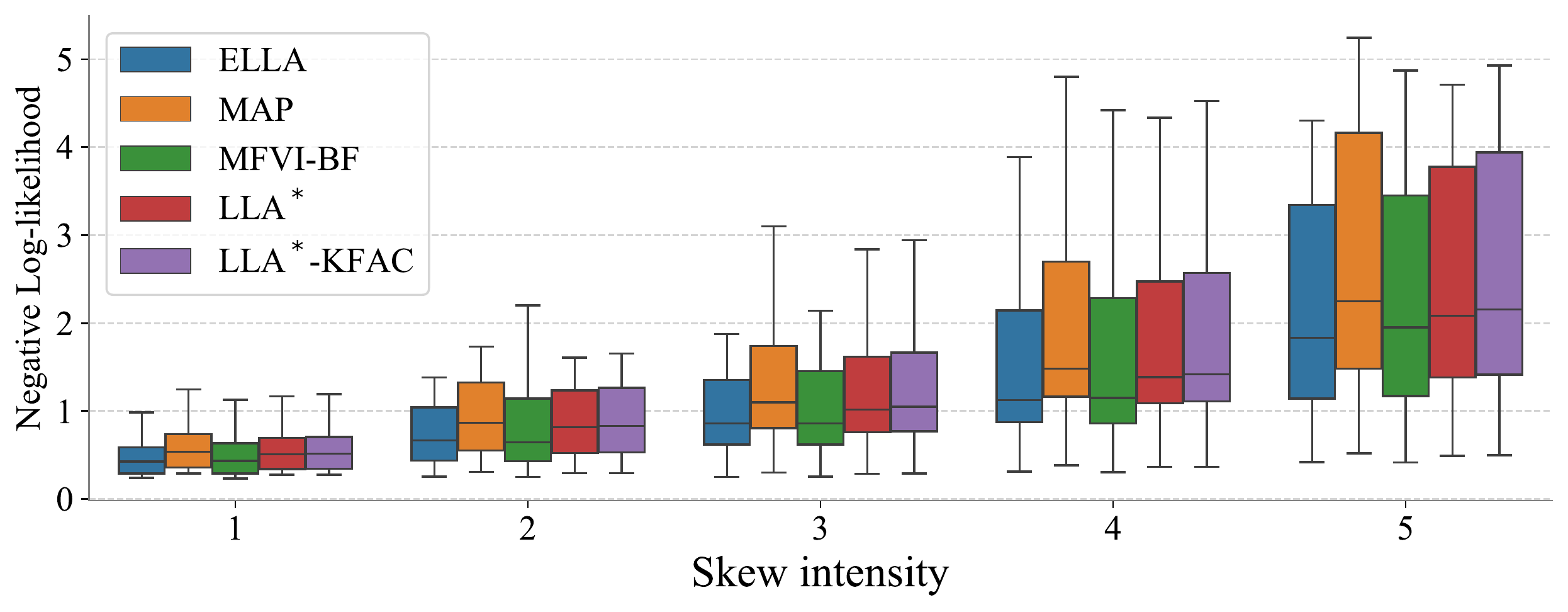}
    \end{subfigure}
    \begin{subfigure}[b]{0.49\linewidth}
    \centering
    \includegraphics[width=\linewidth]{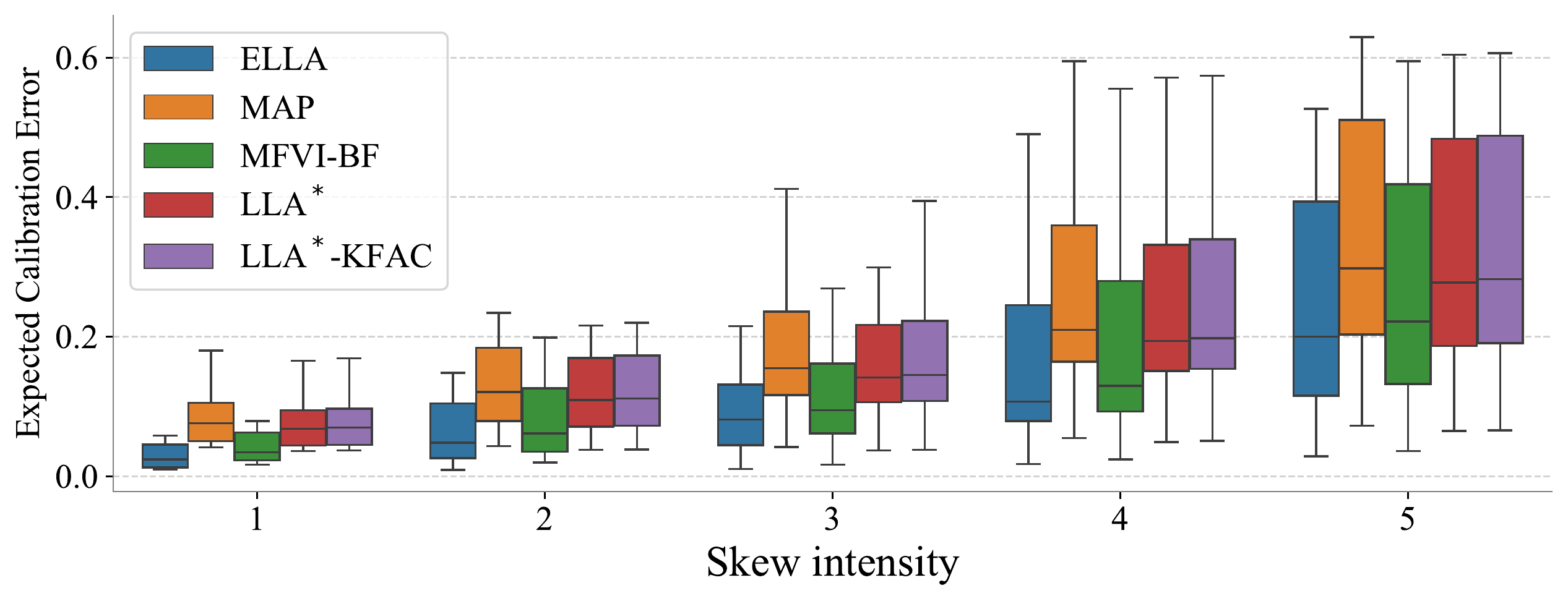}
    \end{subfigure}
\end{minipage}
\caption{\small NLL (Left) and ECE (Right) on CIFAR-10 corruptions for models trained with ResNet-20 architecture. Each box corresponds to a summary of the results across 19 types of skew.}
\label{fig:cifar-corruptions2}
\end{figure*}

\begin{figure*}[t]
\centering
\begin{minipage}{0.99\linewidth}
\centering
    \begin{subfigure}[b]{0.49\linewidth}
    \centering
    \includegraphics[width=\linewidth]{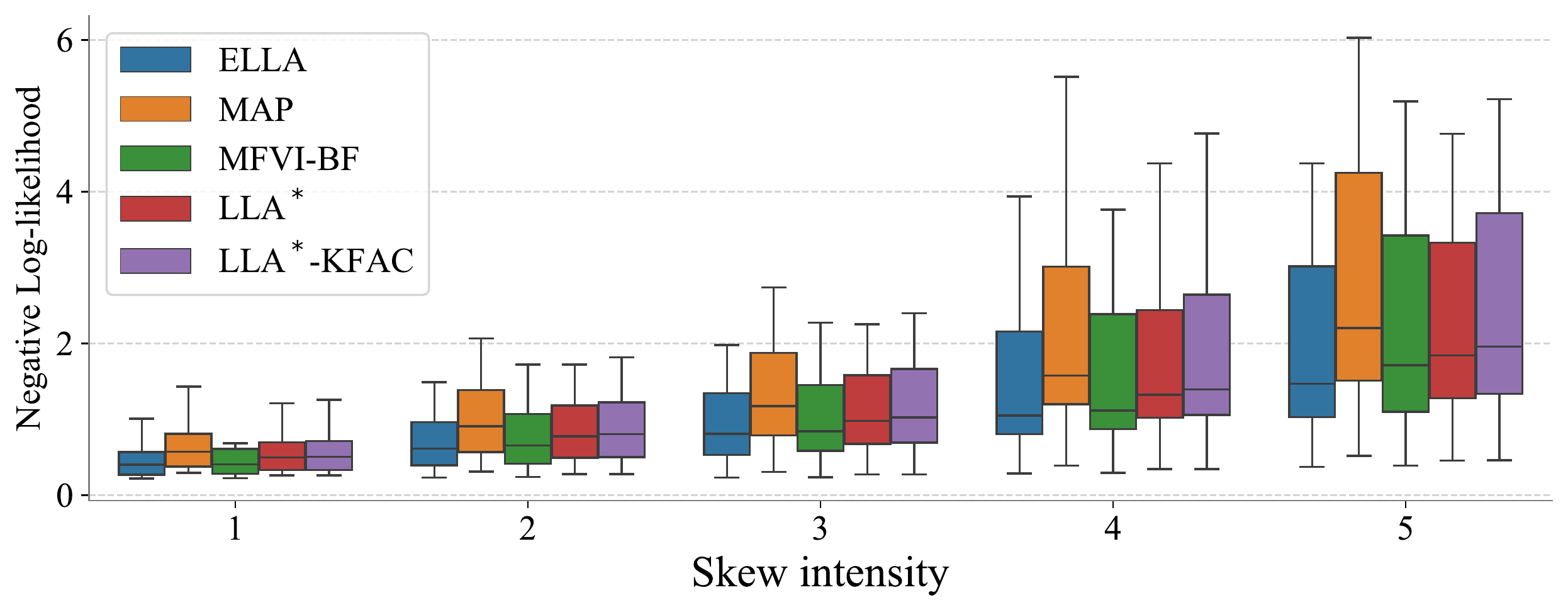}
    \end{subfigure}
    \begin{subfigure}[b]{0.49\linewidth}
    \centering
    \includegraphics[width=\linewidth]{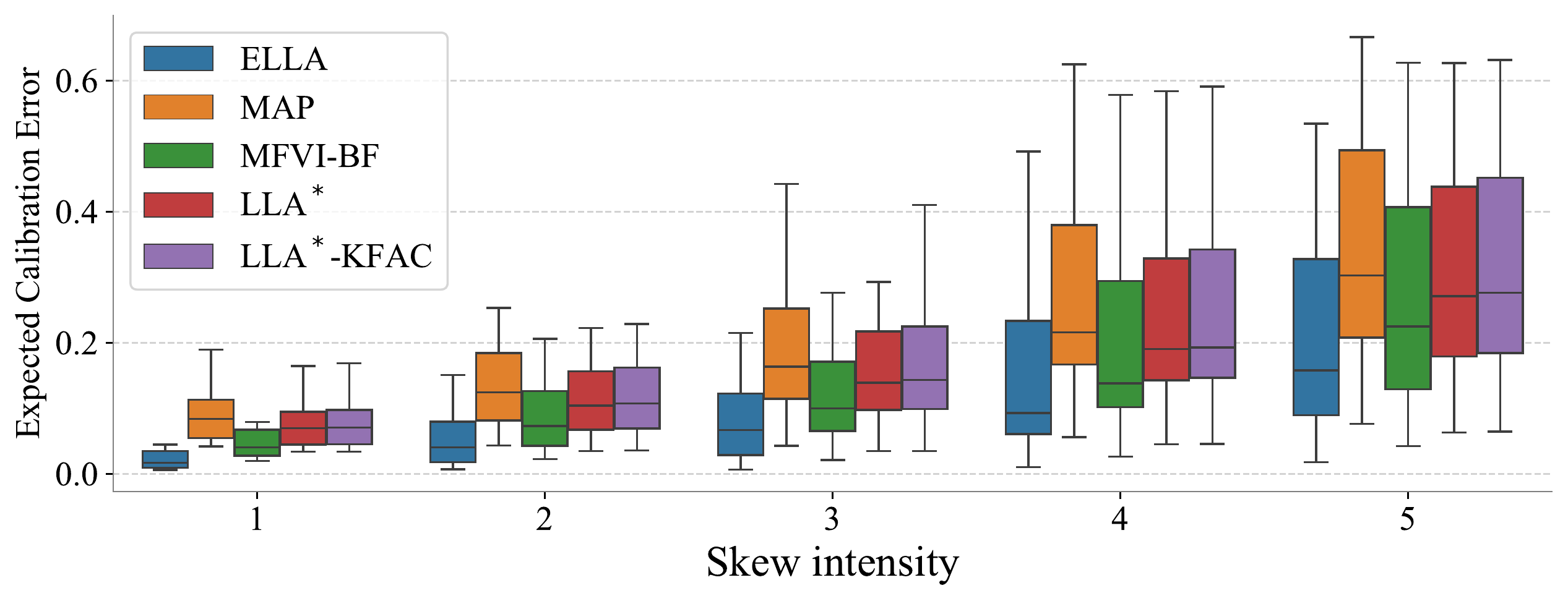}
    \end{subfigure}
\end{minipage}
\caption{\small NLL (Left) and ECE (Right) on CIFAR-10 corruptions for models trained with ResNet-32 architecture. Each box corresponds to a summary of the results across 19 types of skew.}
\label{fig:cifar-corruptions3}
\end{figure*}

\begin{figure*}[t]
\centering
\begin{minipage}{0.99\linewidth}
\centering
    \begin{subfigure}[b]{0.49\linewidth}
    \centering
    \includegraphics[width=\linewidth]{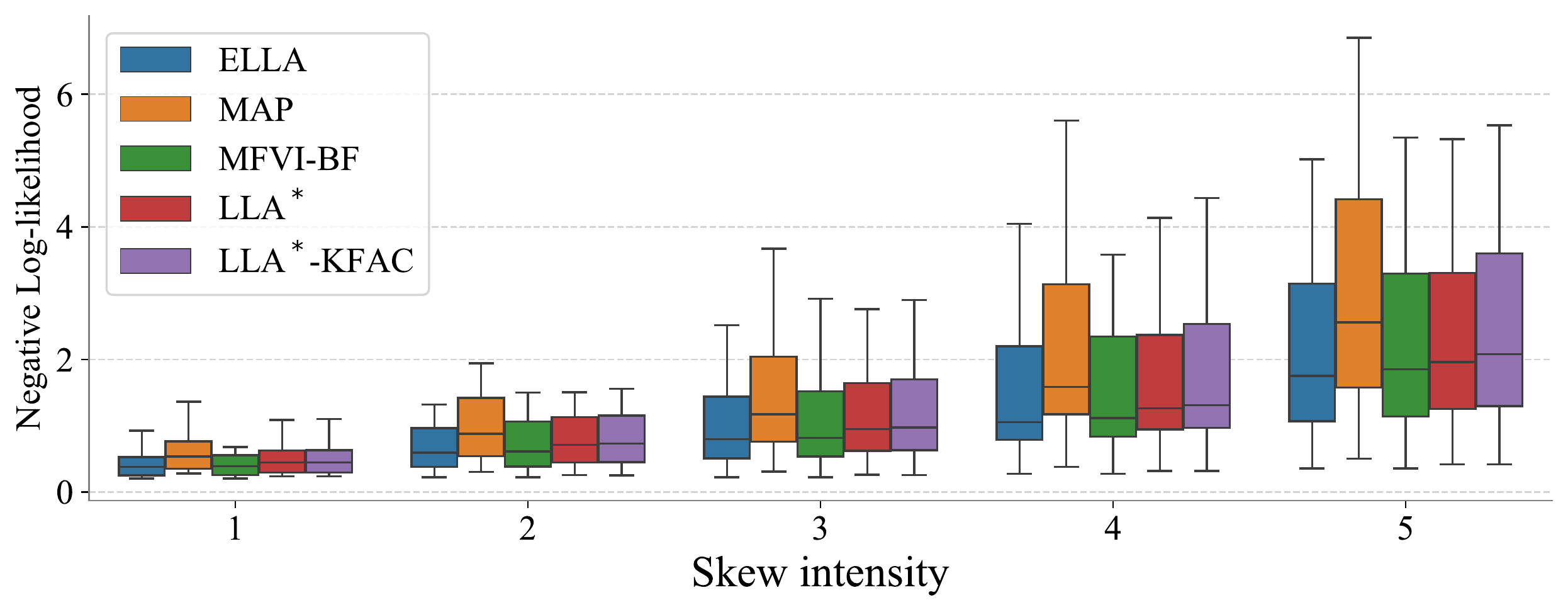}
    \end{subfigure}
    \begin{subfigure}[b]{0.49\linewidth}
    \centering
    \includegraphics[width=\linewidth]{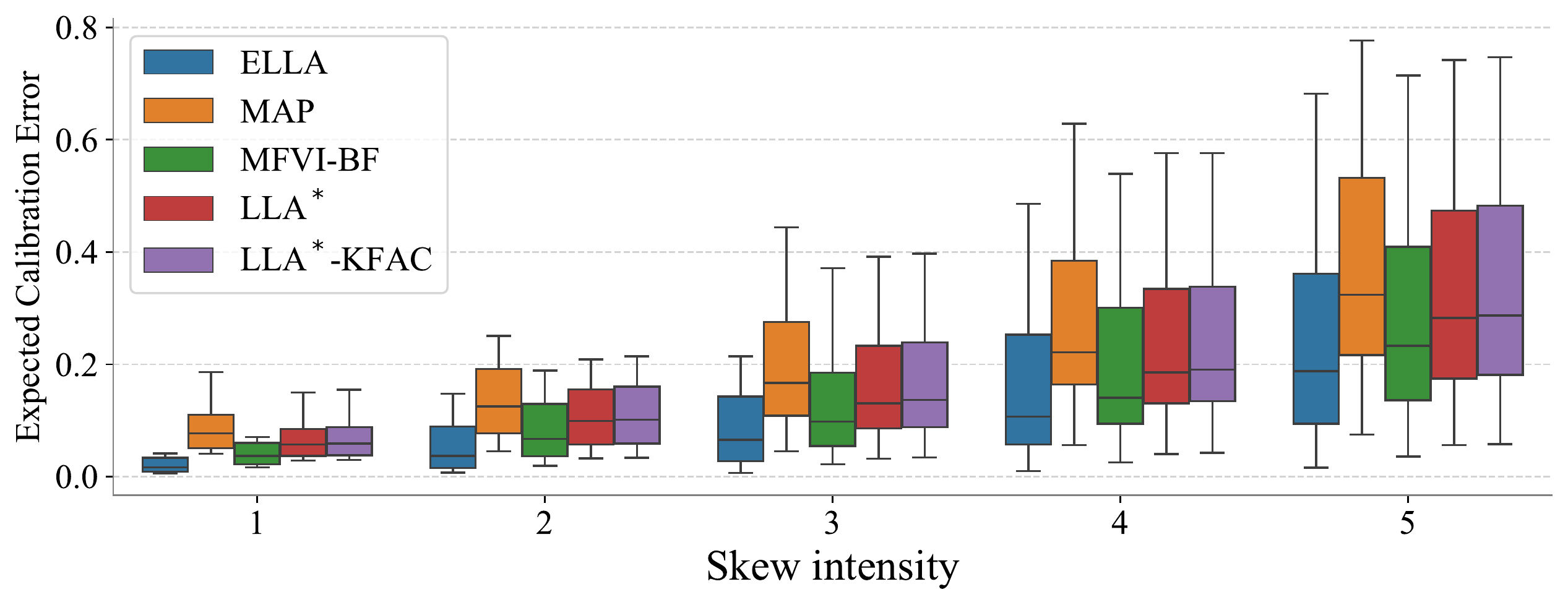}
    \end{subfigure}
\end{minipage}
\caption{\small NLL (Left) and ECE (Right) on CIFAR-10 corruptions for models trained with ResNet-44 architecture. Each box corresponds to a summary of the results across 19 types of skew.}
\label{fig:cifar-corruptions4}
\end{figure*}

\end{document}